\def\vf{{\bm{f}}}
\def\vpi{\boldsymbol{\pi}} 
\def\mA{{\bm{A}}}
\def\mD{{\bm{D}}}
\def\mE{{\bm{E}}}
\def\mH{{\bm{H}}}
\def\mL{{\bm{L}}}
\def\mP{{\bm{P}}}
\def\mQ{{\bm{Q}}}
\def\mR{{\bm{R}}}
\def\mS{{\bm{S}}}
\def\mV{{\bm{V}}}
\def\mW{{\bm{W}}}
\def\mX{{\bm{X}}}
\DeclareMathAlphabet{\mathsfit}{\encodingdefault}{\sfdefault}{m}{sl}
\SetMathAlphabet{\mathsfit}{bold}{\encodingdefault}{\sfdefault}{bx}{n}
\def\gG{{\mathcal{G}}}
\def\gT{{\mathcal{T}}}
\newcommand{\R}{\mathbb{R}}
\newcommand{\KL}{D_{\mathrm{KL}}}
\newcommand{\DT}{D_\mathrm{T}} 
\theoremstyle{plain}
\newtheorem{theorem}{Theorem}[section]
\newtheorem{proposition}[theorem]{Proposition}
\theoremstyle{definition}
\newtheorem{definition}[theorem]{Definition}
\theoremstyle{remark}
\newtheorem{remark}[theorem]{Remark}
\newtheorem{example}{Example}
\newcommand{\method}{Heat Geodesic Embedding\xspace} 
\newcommand{\methodshort}{HeatGeo\xspace}
\title{A Heat Diffusion Perspective on Geodesic Preserving Dimensionality Reduction}
\author{
  Guillaume Huguet$^1$\thanks{Equal contribution} \quad
  Alexander Tong$^1$\footnotemark[1] \quad
  Edward De Brouwer$^2$\footnotemark[1] \quad
  \AND
  Yanlei Zhang$^1$ \quad
  Guy Wolf$^1$ \quad
  Ian Adelstein$^2$\thanks{Co-senior authors} \quad
  Smita Krishnaswamy$^2$\footnotemark[2] \\
  $^1$Université de Montréal; Mila - Quebec AI Institute \quad $^2$ Yale University
}
\begin{document}

\doparttoc 
\faketableofcontents 


\maketitle

\begin{abstract}
Diffusion-based manifold learning methods have proven useful in representation learning and dimensionality reduction of modern high dimensional, high throughput, noisy datasets. Such datasets are especially present in fields like biology and physics. While it is thought that these methods preserve underlying manifold structure of data by learning a proxy for geodesic distances, no specific theoretical links have been established. Here, we establish such a link via results in Riemannian geometry explicitly connecting heat diffusion to manifold distances. In this process, we also formulate a more general heat kernel based manifold embedding method that we call {\em heat geodesic embeddings}. This novel perspective makes clearer the choices available in manifold learning and denoising. Results show that our method outperforms existing state of the art in preserving ground truth manifold distances,  and preserving cluster structure in toy datasets. We also showcase our method on single cell RNA-sequencing datasets with both continuum and cluster structure, where our method enables interpolation of withheld timepoints of data. Finally, we show that parameters of our more general method can be configured to give results similar to PHATE (a state-of-the-art diffusion based manifold learning method) as well as SNE (an attraction/repulsion neighborhood based method that forms the basis of t-SNE). 

\end{abstract}

\section{Introduction}

The advent of high throughput and high dimensional data in various fields of science have made dimensionality reduction and visualization techniques an indispensable part of exploratory analysis. Diffusion-based manifold learning methods, based on the data diffusion operator, first defined in \cite{coifman_diffusion_2006}, have proven especially useful due to their ability to handle noise and density variations while preserving structure. As a result, diffusion-based dimensionality reduction methods, such as PHATE \cite{moon_visualizing_2019}, T-PHATE~\cite{busch2023multi}, and diffusion maps \cite{coifman_diffusion_2006}, have emerged as methods for analyzing high throughput noisy data in various situations. While these methods are surmised to learn manifold geodesic distances, no specific theoretical links have been established. Here, we establish such a link by using Varadhan’s formula \cite{varadhan1967behavior} and a parabolic Harnack inequality~\cite{li_yau,saloff2010heat}, which relate manifold distances to heat diffusion directly. This lens gives new insight into existing dimensionality reduction methods, including when they preserve geodesics, and suggests a new method for dimensionality reduction to explicitly preserve geodesics, which we call {\em heat geodesic embeddings}\footnote{\url{https://github.com/KrishnaswamyLab/HeatGeo}}. Furthermore, based on our understanding of other methods \cite{moon_visualizing_2019, coifman_diffusion_2006}, we introduce theoretically justified parameter choices that allow our method to have greater versatility in terms of distance denoising and emphasis on local versus global distances.

Generally, data diffusion operators are created by first computing distances between datapoints, transforming these distances into affinities by pointwise application of a kernel function (like a Gaussian kernel), and then row normalizing with or without first applying degree normalization into a Markovian diffusion operator $\mP$ \cite{coifman_diffusion_2006,haghverdi_diffusion_2016, huguet2022time,moon_manifold_2018,van2018recovering}. The entries of $\mP(x,y)$ then contain probabilities of diffusing (or random walk probabilities) from one datapoint to another.  Diffusion maps and PHATE use divergences between these diffusion or random walk-based probability distributions $\mP(x, \cdot)$ and $\mP(y,\cdot)$ to design a diffusion-based distance that may not directly relate to manifold distance. Our framework directly utilizes a heat-kernel based distance, and offers a framework to study these diffusion methods from a more comprehensive perspective. By configuring parameters in our framework, we show how we can navigate a continuum of embeddings from PHATE-like to SNE-like methods.

In summary, our contributions are as follows:
\begin{itemize}[leftmargin=*]
    \item We define the \textit{heat-geodesic} dissimilarity based on Varadhan's formula. 
    \item Based on this dissimilarity, we present a versatile geodesic-preserving method for dimensionality reduction which we call {\em heat geodesic embedding.}
    \item  We establish a relationship between diffusion-based distances and the heat-geodesic dissimilarity.
    \item We establish connections between our method and popular dimensionality reduction techniques such as  PHATE and t-SNE, shedding light on their geodesic preservation and denoising properties based on modifications of the computed dissimilarity and distance preservation losses.
    \item We empirically demonstrate the advantages of \method in preserving manifold geodesic distances in several experiments showcasing more faithful manifold distances in the embedding space, as well as our ability to interpolate data within the manifold.
\end{itemize}

\begin{figure}[h]
    \centering   \includegraphics[width=\columnwidth]{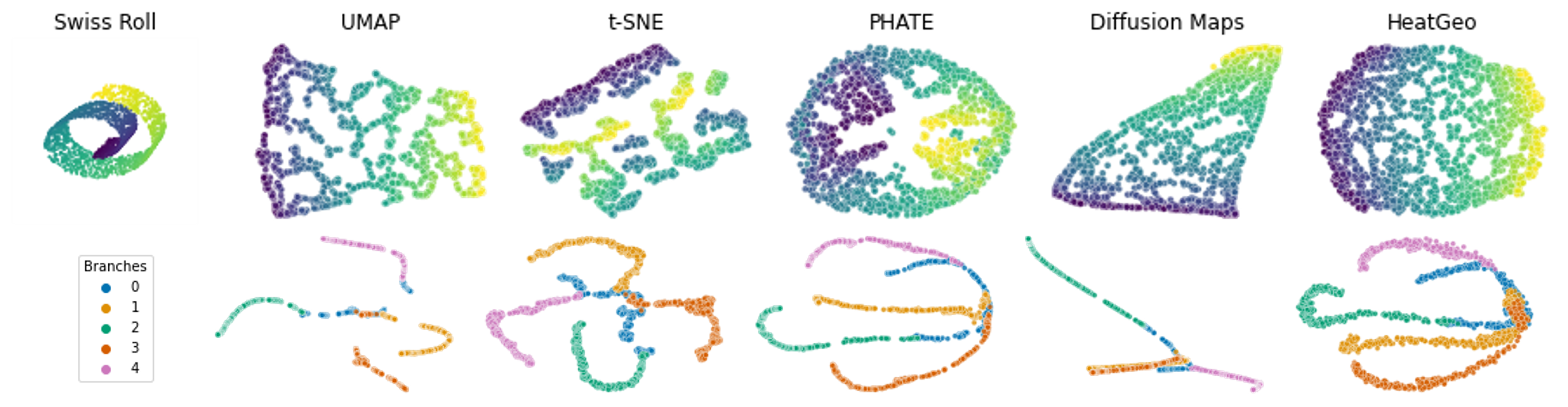}
    \vspace{-5mm}
    \caption{Embeddings of the Swiss roll (top) and Tree (bottom) datasets for different manifold learning methods. Our \methodshort method correctly unrolls the Swiss roll while t-SNE and UMAP create undesirable artificial clusters.}
    \label{fig:swiss_roll_viz}
\end{figure}

\section{Preliminaries}\label{sec: prelim}
First, we introduce fundamental notions that form the basis of our manifold learning methods: Varadhan's formula~\cite{varadhan1967behavior} on a manifold, diffusion processes on graphs, efficient heat kernel approximations, and multidimensional scaling~\cite{carroll1998multidimensional,hout2013multidimensional, kruskal1964multidimensional}. 

\paragraph{Varadhan's formula}
Varadhan's formula is a powerful tool in differential geometry that establishes a connection between the heat kernel and the shortest path (geodesic) distance on a Riemannian manifold. Its versatility has led to widespread applications in machine learning~\cite{crane2013geodesics,heitz2021ground,huguet2022geodesic,sharp2019vector,solomon2015convolutional,sun2009concise}. Let $(M,g)$ be a closed Riemannian manifold, and $\Delta$ the Laplace-Beltrami operator on $M$. The heat kernel $h_t(x,y)$ on $M$ is the minimal positive fundamental solution of the heat equation $\frac{\partial u}{\partial t} = \Delta u$
with initial condition $h_0(x,y) = \delta_x(y)$. In Euclidean space the heat kernel is $h_t(x,y)=(4\pi t)^{-n/2}~ e^{ -d(x,y)^2/ 4t}$ so that $ -4t \log h_t(x,y) = 2nt \log (4\pi t) + d^2(x,y)$ and we observe the following limiting behavior:
\begin{equation}\label{varadhan}
    \lim_{t\to 0} -4t \log h_t(x,y) = d^2(x,y).
\end{equation}
Varadhan \cite{varadhan1967behavior} (see also \cite{molchanov1975diffusion}) proved that eq.~\ref{varadhan} (now Varadhan's formula) holds more generally on complete Riemannian manifolds $M$, where $d(x,y)$ is the geodesic distance on $M$, and the convergence is uniform over compact subsets of $M$.
A related result for complete Riemannian manifolds that satisfy the parabolic Harnack inequality (which includes convex domains in Euclidean space and Riemannian manifolds with non-negative Ricci curvature) is the two-sided heat kernel bound~\cite{saloff2010heat,li_yau}, showing that for any $\epsilon \in (0,1)$ there exist constants $c(\epsilon)$ and $C(\epsilon)$ such that
\begin{equation}\label{eq: harnack}
    \frac{c(\epsilon)}{V(x,\sqrt{t})}\exp\left(-\frac{d(x,y)^2}{4(1+\epsilon)t}\right) \leq h_t(x,y) \leq \frac{C(\epsilon)}{V(x,\sqrt{t})}\exp\left(-\frac{d(x,y)^2}{4(1-\epsilon)t}\right)
\end{equation}
We denote this relation by $h_t(x,y)\simeq V(x,\sqrt{t})^{-1}\exp(-d(x,y)^2/t)$ and note that it again recovers eq.~\ref{varadhan} in the $t \to 0$ limit, which is unsurprising as Varadhan's result holds more generally. More important for our purposes is that $h_t(x,y)\simeq V(x,\sqrt{t})^{-1}\exp(-d(x,y)^2/t)$ holds for $t>0$ which will allow us to calculate geodesic distances $d(x,y)$ from a diffusion based estimation of the heat kernel $h_t(x,y)$ and volume on point cloud data.



\paragraph{Graph construction and diffusion}

Our construction starts by creating a graph from a point cloud dataset $\mX$. We use a kernel function $\kappa:\mathbb{R}^d\times \mathbb{R}^d \to \mathbb{R}^+$, such that the (weighted) adjacency matrix is $\mW_{ij} := \kappa(x_i,x_j)$ for all $x_i,x_j\in\mX$. The kernel function could be a Gaussian kernel, or constructed from a nearest neighbor graph. The resulting graph $\gG$ is characterized by the set of nodes (an ordering of the observations), the adjacency matrix, and the set of edges, i.e. pairs of nodes with non-zero weights. The graph Laplacian is an operator acting on signals on $\gG$ such that it mimics the negative of the Laplace operator. The combinatorial graph Laplacian matrix is defined as $\mL := \mQ - \mW$ and its normalized version as $\mL = \bm{I}_n - \mQ^{-1/2}\mW\mQ^{-1/2}$, where $\mQ$ is a diagonal degree matrix with $\mQ_{ii} := \sum_j\mW_{ij}$. The Laplacian is symmetric positive semi-definite, and has an eigen-decomposition $\mL = \Psi \Lambda \Psi^T$. Throughout the presentation, we assume that $\mQ_{ii}>0$ for all $i\in[n]$. The Laplacian allows us to  define the heat equation on $\gG$, with respect to an initial signal $\vf_0\in\mathbb{R}^n$ on $\gG$:
\begin{equation}
    \frac{\partial}{\partial t}\vf(t) + \mL \vf(t) = \bm{0},\,s.t. \quad \vf(0) = \vf_0 \quad t\in\mathbb{R}^+.
\end{equation}

The solution of the above differential equation is obtained with the matrix exponential $\vf(t) = e^{-t\mL}\vf_0$, and we define the heat kernel on the graph as $\mH_t:=e^{-t\mL}$. By eigendecomposition, we have $\mH_t = \Psi e^{-t\Lambda}\Psi^T$. The matrix $H_t$ is a diffusion matrix that characterizes how a signal propagate through the graph according to the heat equations.


Other diffusion matrices on graphs have also been investigated in the literature. The transition matrix $\mP:=\mQ^{-1}\mW$ characterizing a random walk on the graph is another common diffusion matrix used for manifold learning such as PHATE and diffusion maps~\cite{coifman_diffusion_2006}. It is a stochastic matrix that converges to a stationary distribution $\vpi_i:=\mQ_{ii}/\sum_i\mQ_{ii}$, under mild assumptions. 


\paragraph{Fast computation of Heat diffusion}
Exact computation of the (discrete) heat kernel $H_t$ is computationally costly, requiring a full eigendecomposition in $O(n^3)$ time. Fortunately, multiple fast approximations have been proposed, including using orthogonal polynomials or the Euler backward methods. In this work, we use Chebyshev polynomials, as they have been shown to converge faster than other polynomials on this problem \cite{huang2020fast}. 


Chebyshev polynomials are defined by the recursive relation $\{T_k\}_{k\in\mathbb{N}}$ with $T_0(y)=0$, $T_1(y)=y$ and $T_k(y) = 2yT_{k-1}(y) - T_{k-2}(y)$ for $k\geq 2$. Assuming that the largest eigenvalue is less than two (which holds for the normalized Laplacian), we approximate the heat kernel with the truncated polynomials of order $K$
\begin{equation}
    \mH_t \approx p_K(\mL,t):= \frac{b_{t,0}}{2} + \sum_{k=1}^K b_{t,k} T_k(\mL-\bm{I}_n),
\end{equation}
where the $K+1$ scalar coefficients $\{b_{t,i}\}$ depend on time and are evaluated with the Bessel function. Computing $p_K(\mL,t)\vf$ requires $K$ matrix-vector product and $K+1$ Bessel function evaluation. The expensive part of the computation are the matrix-vector products, which can be efficient if the Laplacian matrix is sparse. Interestingly, we note that the evaluation of $T_k$ do not depend on the diffusion time. Thus, to compute multiple approximations of the heat kernel $\{p_K(\mL,t)\}_{t\in\gT}$, only necessitates reweighting the truncated polynomial $\{T_k\}_{k \in [1, \ldots, K]}$ with the corresponding $|\gT|$ sets of Bessel coefficients. The overall complexity is dominated by the truncated polynomial computation which takes $O(K (E+n))$ time where $E$ is the number of non-zero values in $\mL$. 

Another possible approximation is using the Euler backward method. It requires solving $K$ systems of linear equations $\vf(t) = (\bm{I}_n + (t/K)\mL)^{-K}\vf(0)$, which can be efficient for sparse matrices using the Cholesky decomposition \cite{heitz2021ground,solomon2015convolutional}. We quantify the differences between the heat kernel approximations in Appendix~\ref{app:Additional results}.


\paragraph{Multidimensional scaling}
Given a dissimilarity function $d$ between data points, multidimensional scaling (MDS)~\cite{kruskal1964multidimensional} finds an embedding $\phi$ such that the difference between the given dissimilarity and the Euclidean distance in the embedded space is minimal across all data points. Formally, for a given function $d:\mathbb{R}^d \times\mathbb{R}^d \to \mathbb{R}^+$, MDS minimizes the following objective:
\begin{equation}\label{eq: mds_loss}
    L(\mX) = \bigg( \sum_{ij} \big(d(x_i,x_j) - \|\phi(x_i)-\phi(x_j)\|_2\big)^2  \bigg)^{1/2},
\end{equation}
In metric MDS the solution is usually found by the SMACOF algorithm~\cite{takane1977nonmetric}, or stochastic gradient descent~\cite{zheng2018graph}, while classic MDS is defined by eigendecomposition. 

\section{Related Work}\label{sec: related work}

We review state-of-the-art embedding methods and contextualize them with respect to \method. A formal theoretical comparison of all methods is given in Section~\ref{sec: relation_other_methods}. Given a set of high-dimensional datapoints, the objective of embedding methods is to create a map that embeds the observations in a lower dimensional space, while preserving distances or similarities. Different methods vary by their choice of distance or dissimilarity functions, as shown below.

\paragraph{Diffusion maps} In diffusion maps~\cite{coifman_diffusion_2006}, an embedding in $k$ dimensions is defined via the first $k$ non-trivial right eigenvectors of $\mP^t$ weighted by their eigenvalues. The embedding preserves the \textbf{\textit{diffusion distance}} $ DM_{\mP}(x_i,x_j):=\| (\boldsymbol{\delta_i}\mP^t - \boldsymbol{\delta_j}\mP^t)  (1/\vpi) \|_2$, where $\boldsymbol{\delta}_i$ is a vector such that $(\boldsymbol{\delta}_i)_j = 1$ if $j=i$ and $0$ otherwise, and $\vpi$ is the stationary distribution of $\mP$. Intuitively, $DM_{\mP}(x_i,x_j)$ considers all the $t$-steps paths between $x_i$ and $x_j$. A larger diffusion time can be seen as a low frequency graph filter, i.e. keeping only information from the low frequency transitions such has the stationary distributions. For this reason, using diffusion with $t>1$ helps denoising the relationship between observations. 



\paragraph{PHATE} This diffusion-based method preserves the \textbf{\textit{potential distance}}~\cite{moon_visualizing_2019} $PH_{\mP}:= \| -\log\boldsymbol{\delta}_i\mP^t + \log\boldsymbol{\delta}_j\mP^t \|_2$, and justifies this approach using the $\log$ transformation to prevent nearest neighbors from dominating the distances. An alternative approach is suggested using a square root transformation. Part of our contributions is to justify the $\log$ transformation from a geometric point of view. The embedding is defined using multidimensional scaling, which we present below.

\paragraph{SNE, t-SNE, UMAP} Well-known attraction/repulsion methods such as SNE~\cite{hinton2002stochastic}, t-SNE~\cite{van2008visualizing}, and UMAP~\cite{mcinnes2018umap} define an affinity matrix with entries $p_{ij}$ in the ambient space, and another affinity matrix with entries $q_{ij}$ in the embedded space. To define the embedding, a loss between the two affinity matrices is minimized. Specifically, the loss function is $\KL(p||q):=\sum_{ij}p_{ij}\log p_{ij}/q_{ij}$ in SNE and t-SNE, whereas UMAP adds $\KL(1-p||1-q)$~\cite{bohm2022attraction}. While these methods preserves affinities, they do not preserve any types of distances in the embedding. 



\section{Heat-Geodesic Embedding}\label{sec: heat-Geo}

In this section, we present our \method which is summarized in Alg.~\ref{alg:Heat_geo}. We start by introducing the heat-geodesic dissimilarity, then present a robust transformation, and a heuristic to choose the optimal diffusion time. Proofs not present in the main text are given in the Appendix\ref{app: Theory and algorithm details}.

We consider the discrete case, where we have a set of $n$ points $\{x_i\}_{i=1}^n=:\mX$ in a high dimensional Euclidean space $x_i\in\mathbb{R}^d$. From this point cloud, we want to define a map $\phi:\mathbb{R}^d\to\mathbb{R}^k$ that embeds the observation in a lower dimensional space. An important property of our embedding is that we preserve manifold geodesic distances in a low dimensional space.

\paragraph{Heat-geodesic Dissimilarity}
Inspired by Varadhan’s formula and the Harnack inequalities, we defined a heat-geodesic dissimilarity based on heat diffusion on graphs. From observations (datapoints) in $\R^n$, we define an undirected graph $\gG$, and compute its heat kernel $\mH_t = e^{-t\mL}$, where $\mL$ is the combinatorial or symmetrically normalized graph Laplacian (the heat kernel is thus symmetric).

\begin{definition}
For a diffusion time $t>0$ and tunable parameter $\sigma>0$, we define the \textbf{heat-geodesic dissimilarity} between $x_i,x_j\in\mX$ as 
\begin{equation*}
    d_t(x_i,x_j):= \left[-4t\log (\mH_t)_{ij} - \sigma 4t \log (\mV_t)_{ij}\right]^{1/2}
\end{equation*}
where $\mH_t$ is the heat kernel on the graph $\gG$, and $(\mV_t)_{ij}:= 2[(\mH_t)_{ii} + (\mH_t)_{jj}]^{-1}$.
\end{definition}
Here the $\log$ is applied elementwise, and the term $-4t\log (\mH_t)_{ij}$ corresponds to the geodesic approximation when $t\to 0$ as in Varadhan's formula. In practice one uses a fixed diffusion time $t>0$, so we add a symmetric volume correction term as in the Harnack inequality, ensuring that $d_t(x_i,x_j)$ is symmetric. From Sec.~\ref{sec: prelim}, we have $h_t(x,x) \simeq V(x,\sqrt{t})^{-1}$, and we use the diagonal of $\mH_t$ to approximate the inverse of the volume. With this volume correction term and $\sigma=1$, the dissimilarity is such that $d_t(x_i,x_i) = 0$ for all $t>0$. When $\sigma =0$ or the manifold has uniform volume growth (as in the constant curvature setting) we show that the heat-geodesic dissimilarity is order preserving:

\begin{proposition} When $\sigma =0$ or the manifold has uniform volume growth, i.e.~$(\mH_t)_{ii} = (\mH_t)_{jj}$, and the heart kernel is pointwise monotonically decreasing, we have for triples $x,y,z \in \mX$ that $|x-y|>|x-z|$ implies $d_t(x,y)>d_t(x,z)$, i.e.~the heat-geodesic dissimilarity is order preserving. 
\label{proposition:order_preserving}
\end{proposition}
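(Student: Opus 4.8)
The plan is to reduce both hypotheses to a single normal form for $d_t$ and then finish with an elementary one-variable monotonicity argument. The key observation is that under either assumption the volume-correction term contributes at most an additive constant that does not depend on the pair of points being compared.

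First I would dispose of the correction term. When $\sigma=0$ it is simply absent, leaving $d_t(x_i,x_j)^2 = -4t\log(\mH_t)_{ij}$. Under uniform volume growth all diagonal entries share a common value $v:=(\mH_t)_{ii}$, so $(\mV_t)_{ij} = 2[v+v]^{-1} = v^{-1}$ for every pair, and therefore $-\sigma 4t\log(\mV_t)_{ij} = \sigma 4t\log v =: c$ is independent of $i,j$. In both cases we may thus write
\[
d_t(x_i,x_j)^2 = -4t\log(\mH_t)_{ij} + c,
\]
with $c$ a fixed constant ($c=0$ when $\sigma=0$). Because the heat kernel is maximal on the diagonal (consistent with the monotone-decreasing hypothesis), the off-diagonal entries satisfy $(\mH_t)_{ij}\le v$, which keeps the radicand nonnegative so that $d_t$ is well defined.

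Next I would record the monotonicity of the resulting map. For fixed $t>0$, set $\psi(s):=[-4t\log s + c]^{1/2}$. Since $\log$ is strictly increasing and the prefactor $-4t$ is negative, $-4t\log s$ is strictly decreasing in $s$; composing with the strictly increasing square root shows that $\psi$ is strictly decreasing on the range of admissible values. Hence $d_t(x,y)=\psi((\mH_t)_{xy})$ is a strictly decreasing function of the heat-kernel entry.

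The conclusion then follows immediately: by hypothesis the heat kernel is pointwise monotonically decreasing in distance, so $|x-y|>|x-z|$ forces $(\mH_t)_{xy}<(\mH_t)_{xz}$, and applying the strictly decreasing $\psi$ reverses the inequality to give $d_t(x,y)>d_t(x,z)$. I do not anticipate a serious obstacle; the only delicate points are (i) confirming that the equal-diagonal assumption genuinely makes the correction term constant \emph{across the two pairs} $(x,y)$ and $(x,z)$ — were the diagonals unequal the offsets could differ and spoil the ordering — and (ii) tracking the sign flip induced by the factor $-4t$, which is precisely what turns the decreasing heat kernel into an increasing dissimilarity.
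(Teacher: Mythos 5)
Your proof is correct and follows essentially the same route as the paper's: drop (or neutralize) the volume term, then use the pointwise monotone decrease of the heat kernel together with the strictly decreasing map $s \mapsto [-4t\log s + c]^{1/2}$ to reverse the inequality. The one refinement you add — checking explicitly that under uniform volume growth $(\mV_t)_{ij}$ is the \emph{same} constant across the pairs $(x,y)$ and $(x,z)$, so the correction only shifts both squared dissimilarities equally — is left implicit in the paper's one-line reduction and is a worthwhile clarification, but it does not change the argument.
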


\begin{proof}
    When $\sigma =0$ or the manifold has uniform volume growth we need only consider the $-4t\log (\mH_t)_{ij}$ terms. The assumption of pointwise monotonicity of the heat kernel entails that $|x-y|>|x-z|$ implies $\mH_t (x,y) < \mH_t (x,z)$. We are able to conclude that $-4t \log \mH_t (x,y) > -4t \log \mH_t (x,z) $ and thus $d_t(x,y)>d_t(x,z)$.
\end{proof}

\paragraph{Denoising Distances with Triplet Computations}
We note that both diffusion maps and PHATE compute a triplet distance between datapoints, i.e., rather than using the direct diffusion probability between datapoints, they use the a distance between corresponding rows of a diffusion operator. In particular, diffusion maps using Euclidean distance, and PHATE uses an M-divergence. Empirically, we notice that this step acts as a denoiser for distances. We formalize this observation in the following proposition. We note $\DT$ the triplet distance. The triplet distance compares the distances relative to other points. Intuitively, this is a denoising step, since the effect of the noise is spread across the entire set of points. For a reference dissimilarity like the heat-geodesic, it is defined as $\DT(x_i,x_j):=\|d_t(x_i,\cdot)-d_t(x_j,\cdot)\|_2$. For linear perturbations of the form $d_t(x_i,x_j)+\epsilon$, where $\epsilon \in \R$, the effect of $\epsilon$ on $\DT(x_i,x_j)$ is less severe than on $d_t(x_i,x_j)$.

\begin{propositionE}[][end, restate]\label{proposition:prop_denoise}
    Denote the perturbed triplet distance by $\widetilde{\DT}(x_i,x_j) = ||\Tilde{d_t}(x_i, \cdot) - \Tilde{d_t}(x_j, \cdot)||_2$ where $\Tilde{d}_t(x_i, x_j):= d_t(x_i,x_j)+\epsilon$ and $\Tilde{d}_t(x_i, x_k):= d_t(x_i, x_k)$ for $k\neq j$. Then the triplet distance $\DT$ is robust to perturbations , i.e., for all $\epsilon>0$,
    \[
    \left(\frac{\widetilde{\DT}(x_i,x_j)}{\DT(x_i,x_j)}\right)^2 \leq \left( \frac{d_t(x_i,x_j)+\epsilon}{d_t(x_i,x_j)} \right)^2.
    \]
\end{propositionE}

\begin{proofE}

    The effect of the noise on the square distance is $(d_t(x_i,x_j)+\epsilon)^2/d(x_i,x_j)^2 = 1 + (2\epsilon d_t(x_i,x_j)+\epsilon^2)/d(x_i,x_j)^2$. Denoting the perturbed triplet distance by $\widetilde{\DT}$, we have
\begin{align*}
    \frac{\widetilde{\DT}(x_i,x_j)^2}{\DT(x_i,x_j)^2} = \frac{\sum_{k\neq i,j}\big( d_t(x_i,x_k) - d_t(x_j,x_k) \big)^2 +2(d_t(x_i,x_j)+\epsilon)^2 }{\DT(x_i,x_j)^2} =   1 + \frac{4\epsilon d(x_i,x_j) + 2\epsilon^2}{\DT(x_i,x_j)^2},
\end{align*}
and we have 
\begin{equation*}
    \frac{4\epsilon d(x_i,x_j) + 2\epsilon^2}{D_T(x_i,x_j)^2} \leq \frac{2\epsilon d_t(x_i,x_j) + \epsilon^2}{d_t(x_i,x_j)^2}
\end{equation*}
For  $\epsilon >0$, this gives
$$\epsilon\geq \frac{4d_t(x_i, x_j)^3 - 2d_t(x_i, x_j)D_T(x_i, x_j)^2}{D_t(x_i, x_j)^2 - 2d_t(x_i, x_j)^2} = -2d_t(x_i, x_j).$$
For $\epsilon <0$, we have 
$$\epsilon\leq \frac{4d_t(x_i, x_j)^3 - 2d_t(x_i, x_j)D_T(x_i, x_j)^2}{D_t(x_i, x_j)^2 - 2d_t(x_i, x_j)^2} = -2d_t(x_i, x_j).$$
Thus $\epsilon \in (-\infty, -2d_t(x_i, x_j))\cup (0, \infty) $. As we require the perturbation factor $\epsilon<< d_t(x_i, x_j)$, hence we choose $\epsilon\in (0, \infty)$. 

\end{proofE}

\paragraph{Optimal diffusion time}

Varadhan's formula suggests a small value of diffusion time  $t$ to approximate geodesic distance on a manifold. However, in the discrete data setting, geodesics are based on graph constructions, which in turn rely on nearest neighbors. Thus, small $t$ can lead to disconnected graphs. Additionally, increasing $t$ can serve as a way of denoising the kernel (which is often computed from noisy data) as it implements a low-pass filter over the eigenvalues, providing the additional advantage of adding noise tolerance.  By computing a sequence of heat kernels $(\mH_t)_t$ and evaluating their entropy $H(\mH_t):=-\sum_{ij}(\mH_t)_{ij}\log(\mH_t)_{ij}$, we select $t$ with the knee-point method~\cite{satopaa2011finding} on the function $t\mapsto H(\mH_t)$. We show in Sec.~\ref{sec: res_dist} that our heuristic for determining the diffusion time automatically leads to better overall results.



\paragraph{Weighted  MDS} The loss in MDS (eq.\ref{eq: mds_loss}) is usually defined with uniform weights. Here, we optionally weight the loss by the heat kernel. In Sec.~\ref{sec: relation_other_methods}, we will show how this modification relates our method to the embedding defined by SNE\cite{hinton2002stochastic}. For $x_i,x_j\in\mX$, we minimize $(\mH_t)_{ij} (d_t(x_i,x_j)-\|\phi(x_i)-\phi(x_j)\|_2)^2$. This promotes geodesic preservation of local neighbors, since more weights are given to points with higher affinities. 

\paragraph{Heat-geodesic embedding}

To define a lower dimensional embedding of a point cloud $\mX$, we construct a matrix from the heat-geodesic dissimilarity, and then use MDS to create the embedding. Our embedding defines a map $\phi$ that minimizes $\big(d_t(x_i,x_j) - \|\phi(x_i) - \phi(x_j)\|_2\big)^2$, for all $x_i,x_j\in\mX$. Hence, it preserves the heat-geodesic dissimilarity as the loss decreases to zero. In Alg.~\ref{alg:Heat_geo}, we present the main steps of our algorithm using the heat-geodesic dissimilarity. A detailed version is presented in the Appendix~\ref{app: Theory and algorithm details}.

\begin{algorithm}[ht]
  \caption{\method}
  \label{alg:Heat_geo}
\begin{algorithmic}[1]
\State {\bfseries Input:} $N \times d$ dataset matrix $\mX$, denoising parameter $\rho\in[0,1]$, Harnack regularization $\sigma > 0$, output dimension $k$.
\State {\bfseries Returns:} $N \times k$ embedding matrix $\mE$.
\State $\mH_t \gets p_K(\mL,t)$
\Comment{Heat approximation}
\State $t \gets \text{Kneedle} \{H(\mH_t)\}_t$
\Comment{Knee detection e.g. \cite{satopaa2011finding}}
\State $\mD \gets -4t\log \mH_t$ + $ t \sigma \mV$
\Comment{$\log$ is applied elementwise}
\State $\mD \gets (1-\rho)\mD + \rho \DT$
\Comment{Triplet interpolation step}
\State Return $\mE \gets \mathrm{MetricMDS}(\mD, \| \cdot \|_2,k)$
\end{algorithmic}
\end{algorithm}



\section{Relation to other manifold learning methods}\label{sec: relation_other_methods}
In this section, we elucidate theoretical connections between the \method and other manifold learning methods. We relate embeddings via the eigenvalues of $\mH_t$ or $\mP^t$ with Laplacian eigenmaps and diffusion maps. We then present the relation between our methods and PHATE and SNE. We provide further analysis in the Appendix~\ref{app: Theory and algorithm details}. In particular, we introduce a new definition of kernel preserving embeddings; either via kernel-based distances (diffusion maps, PHATE) or via similarities (e.g. t-SNE, UMAP).

\paragraph{Diffusion maps with the heat kernel}
Diffusion maps~\cite{coifman_diffusion_2006} define an embedding with the first $k$ eigenvectors $(\phi_i)_i$ of $\mP$, while Laplacian eigenmaps~\cite{belkin2003laplacian} uses the eigenvectors $(\psi_i)_i$ of $\mL$. In the following, we recall the links between the two methods, and show that a rescaled Laplacian eigenmaps preserves the diffusion distance with the heat kernel $\mH_t$.

\begin{lemmaE}[][end, restate]\label{lem:diffmap_eimap}
Rescaling the Laplacian eigenmaps embedding with $x_i\mapsto(e^{-2t\lambda_1}\psi_{1,i},\dotsc,e^{-2t\lambda_k}\psi_{k,i})$ preserves the diffusion distance $DM_{\mH_t}$.
\end{lemmaE}

\begin{proofE}
    Since the eigendecompotision of $\mH_t$ form an orthonormal basis of $\R^n$, and since its first eigenvector is constant, we can write the diffusion distance
    $\|\boldsymbol{\delta_i}\mH_t - \boldsymbol{\delta_i}\mH_t\|_2^2 = \sum_{k\geq 0} e^{-2t\lambda_k}(\psi_{ki}-\psi_{kj})^2 = \sum_{k\geq 1} e^{-2t\lambda_k}(\psi_{ki}-\psi_{kj})^2$.
In particular, this defines the $k$ dimensional embedding $x\mapsto(e^{-t\lambda_1}\psi_1(x),\dotsc,e^{-t\lambda_k}\psi_k(x))$.
\end{proofE}


\paragraph{Relation to PHATE}
The potential distance in PHATE (Sec.~\ref{sec: related work}) is defined by comparing the transition probabilities of two $t$-steps random walks initialized from different vertices. The transition matrix $\mP^t$ mimics the heat propagation on a graph. The heat-geodesic dissimilarity provides a new interpretation of PHATE. In the following proposition, we show how the heat-geodesic relates to the PHATE potential distance with a linear combination of $t$-steps random walks. 
\begin{propositionE}[][end,restate]
    The PHATE potential distance with the heat kernel $PH_{\mH_t}$ can be expressed in terms of the heat-geodesic dissimilarity with $\sigma=0$
\begin{equation*}
     PH_{\mH_t} = (1/4t)^2\|d_t(x_i,\cdot) - d_t(x_j,\cdot)\|_2^2,
\end{equation*}
    and it is equivalent to a multiscale random walk distance with kernel $\sum_{k>0} m_t(k)\mP^k$, where $m_t(k):=t^ke^{-t}/k!$.
\end{propositionE}

\begin{proof} We present a simplified version of the proof, more details are available in Appendix~\ref{app: Theory and algorithm details}. For $\sigma=0$, we have $d_t(x_i,x_j) = -4t\log(\mH_t)_{ij}$, the relation between the PHATE potential and the heat-geodesic follows from the definition
\begin{align*}
     PH_{\mH_t} (x_i,x_j) &= \sum_k \big(-\log\mH_t(x_i,x_k) + \log\mH_t (x_j,x_k)\big)^2 = (1/4t)^2\|d_t(x_i,\cdot) - d_t(x_j,\cdot)\|_2^2.
\end{align*}
Using the heat kernel $\mH_t$ with the random walk Laplacian $\mL_{rw} = \mQ^{-1}\mL= \bm{I}_n - \mQ^{-1}\mW$ corresponds to a multiscale random walk kernel. We can write $\mL_{rw} = \mS\Lambda\mS^{-1}$, where $\mS:=\mQ^{-1/2}\Psi$. Since $\mP = \bm{I}_n - \mR_{rw}$, we have $\mP^t = \mS(\bm{I}_n-\Lambda)^t\mS^{-1}$. Interestingly, we can relate the eigenvalues of $\mH_t$ and $\mP$ with the Poisson distribution. The probability mass function of a Poisson distribution with mean $t$ is given by $m_t(k) := t^ke^{-t}/k!$. For $t\geq 0$, we have  
    $e^{-t(1-\mu)}  = \sum_{k\geq 0} m_t(k)\mu^k$.
With this relationship, we can express $\mH_t$ as a linear combination of $\mP^t$ weighted by the Poisson distribution. Indeed, substituting $\lambda=1-\mu$ in yields
\vspace{-2mm}
\[
    \mH_t = \mS e^{-t\Lambda}\mS^{-1} = \mS \sum_{k=0}^\infty m_t(k)(\bm{I}_n-\Lambda)^k \mS^{-1}  = \sum_{k=0}^\infty m_t(k)\mP^k. 
\]\end{proof}

\begin{proofE} For $\sigma=0$, we have $d_t(x_i,x_j) = -4t\log(\mH_t)_{ij}$, the relation between the PHATE potential and the heat-geodesic follows from the definition
\begin{align*}
     PH_{\mH_t} &= \sum_k \big(-\log\mH_t(x_i,x_k) + \log\mH_t (x_j,x_k)\big)^2 \\
     &= (1/4t)^2\|d_t(x_i,\cdot) - d_t(x_j,\cdot)\|_2^2.
\end{align*}
Using the heat kernel $\mH_t$ with the random walk Laplacian $\mL_{rw} = \mQ^{-1}\mL= \bm{I}_n - \mQ^{-1}\mW$ corresponds to a multiscale random walk kernel. Recall that we can write $\mL_{rw}$ in terms of the symmetric Laplacian $\mL_{rw} = \mQ^{-1/2}\mL_s\mQ^{1/2}$, meaning that the two matrices are similar, hence admit the same eigenvalues $\Lambda$. We also know that $\mL_{s}$ is diagonalizable, since we can write $\mL_s = \mQ^{-1/2}\mL\mQ^{-1/2} = \mQ^{-1/2}\Psi \Lambda \Psi^T\mQ^{-1/2}$. In particular, we have $\mL_{rw} = \mS\Lambda\mS^{-1}$, where $\mS:=\mQ^{-1/2}\Psi$. The random walk matrix can be written as $\mP = \bm{I}_n - \mR_{rw}$, hence its eigenvalues are $(\bm{I}_n-\Lambda)$, and we can write $\mP^t = \mS(\bm{I}_n-\Lambda)^t\mS^{-1}$. Similarly, the heat kernel with the random walk Laplacian can be written as $\mH_t = \mS e^{-t\Lambda} \mS^{-1}$. Interestingly, we can relate the eigenvalues of $\mH_t$ and $\mP$ with the Poisson distribution. Note the probability mass function of a Poisson as $m_t(k) := t^ke^{-t}/k!$, for $t\geq 0$, we have  
\begin{equation}\label{eq: pgf_poisson_app}
    e^{-t(1-\mu)} = e^{-t} \sum_{k\geq 0} \frac{(t\mu)^k}{k!} = \sum_{k\geq 0} m_t(k)\mu^k.
\end{equation}
We note that \eqref{eq: pgf_poisson_app} is the probability generating function of a Poisson distribution with parameter $t$, i.e. $\mathbb{E}[\mu^X]$, where $X\sim\text{Poisson}(t)$. With this relationship, we can express $\mH_t$ as a linear combination of $\mP^t$ weighted by the Poisson distribution. Indeed, substituting $\lambda=1-\mu$ in \eqref{eq: pgf_poisson_app} links the eigenvalues of $\mH_t$ and $\mP$. We write the heat kernel as a linear combination of random walks weighted by the Poisson distribution, we have
\begin{equation*}
    \mH_t = \mS e^{-t\Lambda}\mS^{-1} = \mS \sum_{k=0}^\infty m_t(k)(\bm{I}_n-\Lambda)^k \mS^{-1}  = \sum_{k=0}^\infty m_t(k)\mP^k. 
\end{equation*}    
\end{proofE}
\vspace{-3mm}
\begin{remark}
In the previous proposition, the same argument holds for the symmetric Laplacian and the affinity matrix $\mA := \mQ^{-1/2}\mW\mQ^{-1/2}$ used in other methods such as diffusion maps~\cite{coifman_diffusion_2006}. This is valid since we can write $\mL_{sym}=\mQ^{-1/2}\Psi \Lambda \Psi^T\mQ^{-1/2}$, and $\mA = \bm{I}_n - \mL_{sym}$.
\end{remark}
\begin{remark}
This proposition shows that, as the denoising parameter $\rho \to 1$, \method interpolates to the PHATE embeddings with a weighted kernel $\sum_{k=0}^\infty m_t(k)\mP^k$.
\end{remark}

\paragraph{Relation to SNE}
The heat-geodesic method also relates to the Stochastic Neighbor Embedding (SNE)~\cite{hinton2002stochastic}, and its variation using the Student distribution t-SNE~\cite{linderman_clustering_2017}. In SNE, the similarity between points is encoded via transition probabilities $p_{ij}$. 
The objective is to learn an affinity measure $q$, that usually depends on the embedding distances $\|y_i-y_j\|$, such that it minimizes $\KL(p||q)$. Intuitively, points that have a strong affinity in the ambient space, should also have a strong affinity in the embedded space. Even though the heat-geodesic minimization is directly on the embedding distances, we can show an equivalent with SNE. In Appendix~\ref{app: Theory and algorithm details}, we provide additional comparisons between SNE and our method.

\begin{propositionE}[][end,restate]\label{prop: sne_heatgeo}
    The Heat-Geodesic embedding with squared distances minimization weighted by the heat kernel is equivalent to SNE with the heat kernel affinity in the ambient space, and a Gaussian kernel in the embedded space $q_{ij} =\exp(- \| y_i - y_j\|^2/t)$. 
\end{propositionE}
\begin{proofE}
    The squared MDS weighted by the heat kernel corresponds to
\begin{align*}
    \sum_{ij} h_t(x_i,x_j) (  d_{ij}^2  - \| y_i - y_j \|^2 )^2 &= \sum_{ij}h_t(x_i,x_j) ( -t\log h_t(x_i,x_j)  - \| y_i - y_j\|^2)^2\\
    &=  \sum_{ij}h_t(x_i,x_j) t^2 ( \log h_t(x_i,x_j) -  \log \exp(- \| y_i - y_j\|^2/t)^2. \\
\end{align*}
If there exists an embedding that attain a zero loss, then it is the same as $\sum_{ij}h_t(x_i,x_j) ( \log h_t(x_i,x_j) -  \log \exp(- \| y_i - y_j\|^2/t) = \KL(h_t || q)$.
\end{proofE}
\vspace{-3mm}
\section{Results}
In this section, we show the versatility of our method, showcasing its performance in terms of clustering and preserving the structure of continuous manifolds. We compare the performance of \method with multiple state-of-the-art baselines on synthetic datasets and real-world datasets. For all models, we perform sample splitting with a 50/50 validation-test split. The validation and test sets each consists of 5 repetitions with different random initializations. The hyper-parameters are selected according to the performance on the validation set. We always report the results on the test set, along with the standard deviations computed over the five repetitions.
We use the following methods in our experiments: our \emph{\method}, \emph{diffusion maps}~\cite{coifman_diffusion_2006}, \emph{PHATE}~\cite{moon_visualizing_2019}, \emph{shortest-path} which estimates the geodesic distance by computing the shortest path between two nodes in a graph built on the point clouds, \emph{t-SNE}~\cite{van2008visualizing}, and \emph{UMAP}~\cite{mcinnes2018umap}. Details about each of these methods, and results for different parameters (graph type, heat approximation, etc.) are given in Appendix~\ref{app:Additional results}.

\begin{table}[thbp]
\centering
\caption{Pearson and Spearman correlation between the inferred and ground truth distance matrices on the Swiss roll and Tree datasets (higher is better). Best models on average are bolded.}
\begin{adjustbox}{width=0.8\linewidth}
\begin{tabular}{lcccc}
\toprule
 &  \multicolumn{2}{c}{Swiss roll} & \multicolumn{2}{c}{Tree} \\
 \midrule
          Method &                   Pearson  &                  Spearman &  Pearson &                  Spearman  \\                     
\midrule
          Diffusion Map &          $0.476 \pm 0.226$ &          $0.478 \pm 0.138$ &     $0.656 \pm 0.054$ &          $0.653 \pm 0.057$ \\
                  PHATE &           $0.457 \pm 0.01$ &          $0.404 \pm 0.024$ &  $0.766 \pm 0.023$ &          $0.743 \pm 0.028$ \\
          Shortest Path &          $0.497 \pm 0.144$ &          $0.558 \pm 0.134$ &  $0.780 \pm 0.009$ &          $0.757 \pm 0.019$   \\
        \methodshort (ours) & $\mathbf{0.702 \pm 0.086}$ &   $\mathbf{0.700 \pm 0.073}$ & $\mathbf{0.822 \pm 0.008}$ & $\mathbf{0.807 \pm 0.016}$ \\
\bottomrule
\end{tabular}

\end{adjustbox}
\label{tab:synth_dist}
\end{table}

\subsection{Distance matrix comparison}\label{sec: res_dist}

We start by evaluating the ability of the different methods to recover the ground truth distance matrix of a point cloud. For this task, we use point clouds from the Swiss roll and Tree datasets, for which the ground truth geodesic distance is known. The Swiss roll dataset consists of data points sampled on a smooth manifold (see Fig.~\ref{fig:swiss_roll_viz}). The Tree dataset is created by connecting multiple high-dimensional Brownian motions in a tree-shape structure. In Fig.~\ref{fig:swiss_roll_viz}, we present embeddings of both datasets. Our method recovers the underlying geometry, while other methods create artificial clusters or have too much denoising. Because we aim at a faithful relative distance between data points, we compare the methods according to the Pearson and Spearman correlations of the estimated distance matrices with respect to ground truth. Results are displayed in Tab.~\ref{tab:synth_dist}. We observe that \method typically outperforms previous methods in terms of the correlation with the ground truth distance matrix, confirming the theoretical guarantees provided in Sec.~\ref{sec: heat-Geo} \& \ref{sec: prelim}. Additional results with different noise levels and ambient dimensions are available in Appendix~\ref{app:Additional results}.

\begin{wrapfigure}{R}{0.34\textwidth}
    \centering  \includegraphics[width=0.34\columnwidth]{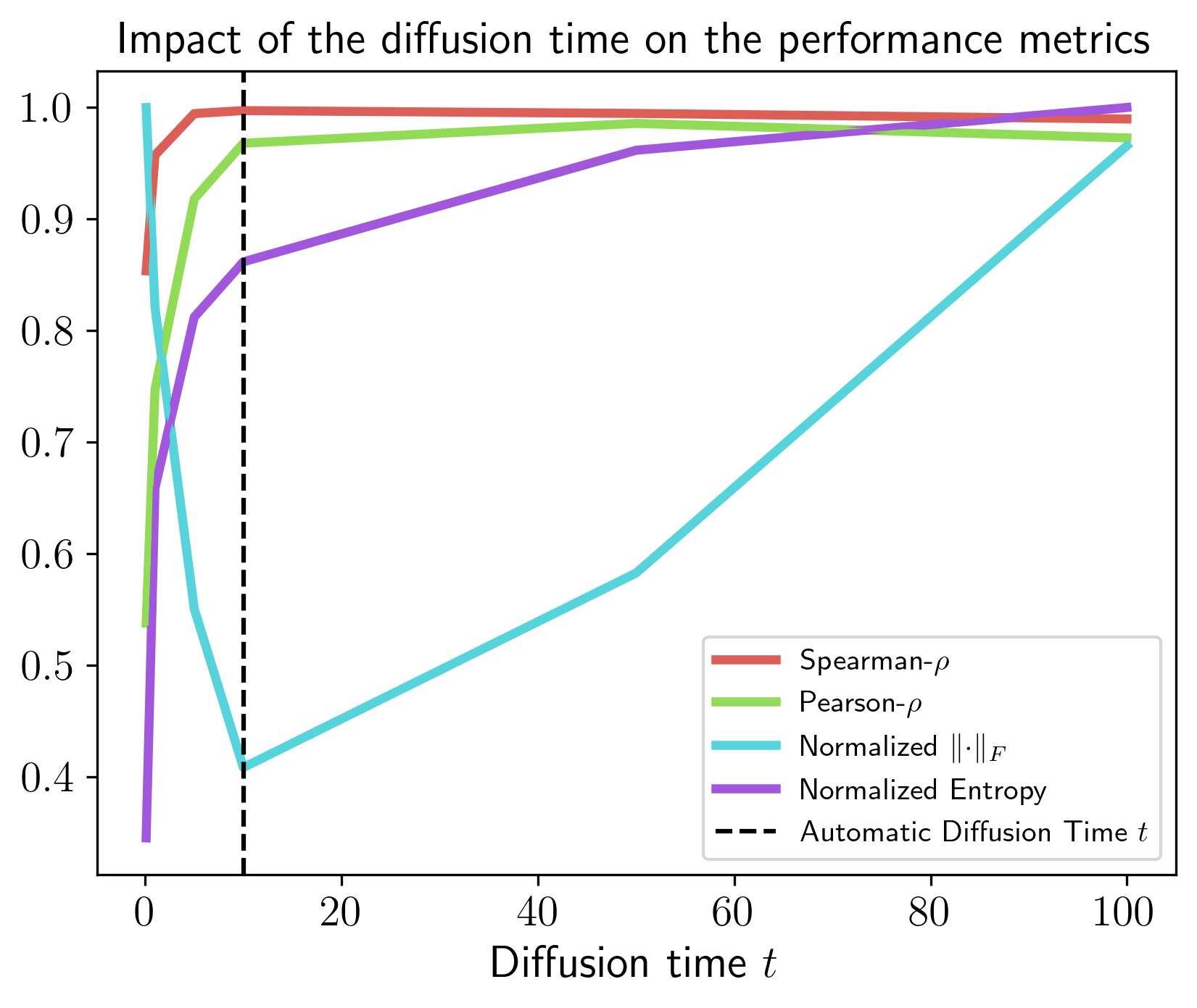}
    \vspace{-15pt}
    \caption{Evolution of the correlation between estimated and ground truth distance matrices in function of the diffusion time $t$.}
    \label{fig:tau_sweep}
\end{wrapfigure}

\paragraph{Optimal diffusion time}

In Section~\ref{sec: heat-Geo}, we described a heuristic to automatically choose the diffusion time based on the entropy of $\mH_t$. In Fig.~\ref{fig:tau_sweep}, we show that the knee-point of $t\mapsto H(\mH_t)$, corresponds to a high correlation with the ground distance, while yielding a low approximation error of the distance matrix (measured by the Frobenius norm of the difference between $\mD$ and the ground truth).

\subsection{Preservation of the inherent data structure}

A crucial evaluation criteria of manifold learning methods is the ability to capture the inherent structure of the data. For instance, clusters in the data should be visible in the resulting low dimensional representation. Similarly, when the dataset consists of samples taken at different time points, one expects to be able to characterize this temporal evolution in the low dimensional embedding \cite{moon_visualizing_2019}. We thus compare the different embedding methods according to their ability to retain clusters and temporal evolution of the data. 

\begin{figure}[htbp]
    \centering
    \includegraphics[width=0.9\columnwidth]{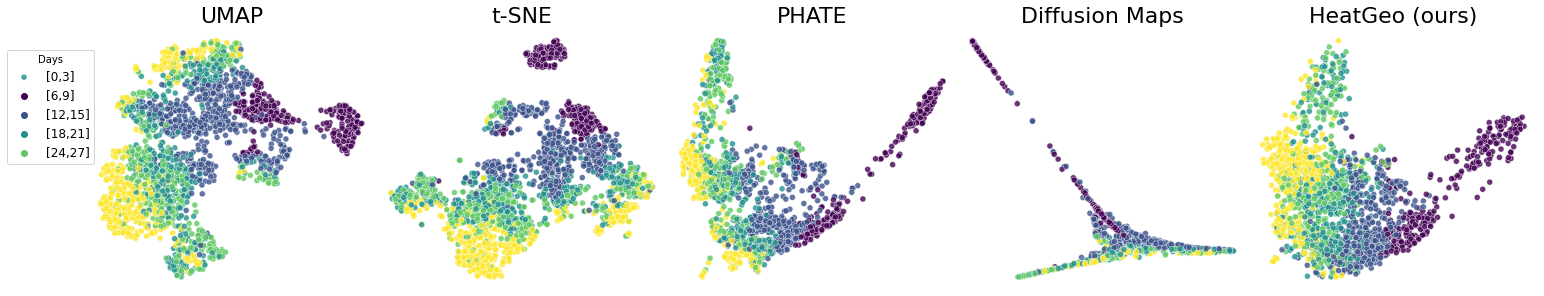}
    \caption{Embeddings of 2000 differentiating cells from embryoid body~\citep{moon_visualizing_2019} over 28 days. UMAP and t-SNE do not capture the continuous manifold representing the cells' evolution.}
    \label{fig:EB}
    \vspace{-0.2cm}
\end{figure}

\paragraph{Identifying clusters.} We use the PBMC dataset, the Swiss roll, and the Tree dataset. The PBMC dataset consists of single-cell gene expressions from 3000 individual peripheral blood mononuclear cells. Cells are naturally clustered by their cell type. For the Tree dataset, we use the branches as clusters. For the Swiss roll dataset, we sample data points on the manifold according to a mixture of Gaussians and use the mixture component as the ground truth cluster labels. For each method, we run k-means on the two-dimensional embedding and compare the resulting cluster assignments with ground truth. Tab.~\ref{tab:clustering_toy} reports the results in terms of homogeneity and adjusted mutual information (aMI). \method is competitive with PHATE and outperforms t-SNE and UMAP on all metrics. Yet, we show in Appendix~\ref{app:Additional results} that all methods tend to perform equally well when the noise level increases. In Fig.~\ref{fig:pbmc_phate_heat}, we present the PBMC embeddings of PHATE and HeatGeo, showing that HeatGeo interpolates to PHATE for $\rho\to 1$.

\begin{figure}
    \centering
    \includegraphics[width=0.9\columnwidth]{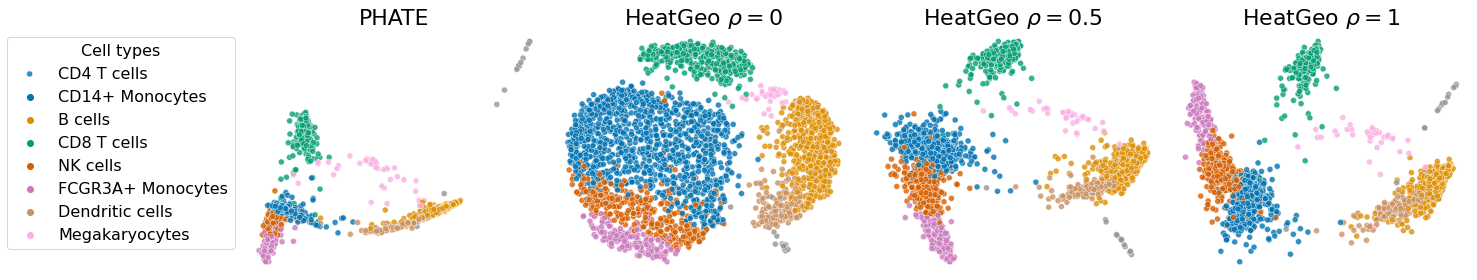}
    \caption{Embeddings on PBMC using the triplet distance with the heat-geodesic for different regularization parameter $\rho$.}
    \label{fig:pbmc_phate_heat}
\end{figure}

\begin{table}[htbp]
\centering
\caption{Clustering quality metrics for different methods. We report the homogeneity and the adjusted mutual information (aMI). Best models on average are bolded (higher is better). }
\label{tab:clustering_toy}
\begin{adjustbox}{width=\linewidth}
\begin{tabular}{lcccccc}
\toprule
 &  \multicolumn{2}{c}{Swiss roll} & \multicolumn{2}{c}{Tree} & \multicolumn{2}{c}{PBMC} \\
 \midrule
     Method &    Homogeneity &  aMI  &  Homogeneity &  aMI  &  Homogeneity &  aMI   \\
\midrule
                 UMAP &           $0.810 \pm 0.036$ &           $0.726 \pm 0.045$  &  $0.678 \pm 0.086$ &           $0.681 \pm 0.086$ & $0.177 \pm 0.037$ &           $0.148 \pm 0.035$  \\
                t-SNE &          $0.748 \pm 0.067$ &           $0.668 \pm 0.068$  &  $0.706 \pm 0.054$ &           $0.712 \pm 0.055$ & $0.605 \pm 0.019$ &           $0.544 \pm 0.022$  \\
                PHATE &          $0.731 \pm 0.035$ &           $0.652 \pm 0.046$ &     $0.550 \pm 0.042$ &           $0.555 \pm 0.042$ & $\mathbf{0.798 \pm 0.012}$ &   $\mathbf{0.785 \pm 0.01}$  \\
                Diffusion Maps & $0.643 \pm 0.053$ & $0.585 \pm 0.051$ & $0.341 \pm 0.103$ & $0.358 \pm 0.093$ & $0.026 \pm 0.001$ & $0.038 \pm 0.001$ \\
                 \methodshort~(ours) &  $\mathbf{0.820 \pm 0.008}$ &  $\mathbf{0.740 \pm 0.018}$ &$\mathbf{0.784 \pm 0.051}$ &  $\mathbf{0.786 \pm 0.051}$ & $0.734 \pm 0.009$ &           $0.768 \pm 0.017$  \\
\bottomrule
\end{tabular}


\end{adjustbox}
\vspace{-2mm}
\end{table}

\paragraph{Temporal data representation.} For this task, we aim at representing data points from population observed at consecutive points in time. We use single cell gene expression datasets collected across different time points, including the Embryoid Body (EB), IPSC~\cite{moon_visualizing_2019}, and two from the 2022 NeurIPS multimodal single-cell integration challenge (Cite \& Multi). To quantitatively evaluate the quality of the continuous embeddings, we first embed the entire dataset and obfuscate all samples from a particular time point (\emph{e.g.,} $t=2$). We then estimate the distribution of the missing time point by using displacement interpolation \citep{villani2009displacement} between the adjacent time points (\emph{e.g.,} $t=1$ and $t=3$). We report the Earth Mover Distance (EMD) between the predicted distribution and true distribution. A low EMD suggests that the obfuscated embeddings are naturally located between the previous and later time points, and that the generated embedding captures the temporal evolution of the data adequately. Results are presented in Tab.~\ref{tab:emd}. \method outperforms other methods on the EB, Multi, and IPSC datasets and is competitive with other approaches on Cite. We show a graphical depiction of the different embeddings for the embryoid (EB) dataset in Fig.~\ref{fig:EB}.

\begin{table}[htbp]
\centering
    \caption{EMD between a linear interpolation of two consecutive time points $t-1$, $t+1$, and the time points $t$. Best models on average are bolded (lower is better).}
    \label{tab:emd}
\begin{adjustbox}{width=0.8\linewidth}
\begin{tabular}{lcccc}
\toprule
        Method &                        Cite &                          EB &                       Multi &                                 IPSC \\
\midrule
           UMAP &  \textbf{0.791 $\pm$ 0.045} &           0.942 $\pm$ 0.053 &           1.418 $\pm$ 0.042 &         0.866 $\pm$ 0.058 \\
        t-SNE &           0.905 $\pm$ 0.034 &           0.964 $\pm$ 0.032 &           1.208 $\pm$ 0.087 &          1.006 $\pm$ 0.026 \\
        PHATE &           1.032 $\pm$ 0.037 &           1.088 $\pm$ 0.012 &           1.254 $\pm$ 0.042 &        0.955 $\pm$ 0.033 \\
        Diffusion Maps &           0.989 $\pm$ 0.080 &           0.965 $\pm$ 0.077 &           1.227 $\pm$ 0.086 &            0.821 $\pm$ 0.039 \\
    HeatGeo (ours) &           0.890 $\pm$ 0.046 &  \textbf{0.733 $\pm$ 0.036} &  \textbf{0.958 $\pm$ 0.044} &      \textbf{0.365 $\pm$ 0.056} \\
\bottomrule
\end{tabular}
\end{adjustbox}
\vspace{-4mm}
\end{table}




\section{Conclusion and Limitations}

The ability to visualize complex high-dimensional data in an interpretable and rigorous way is a crucial tool of scientific discovery. In this work, we took a step in that direction by proposing a general framework for understanding diffusion-based dimensionality reduction methods through the lens of Riemannian geometry. This allowed us to define a novel embedding based on the heat geodesic dissimilarity---a more direct measure of manifold distance. Theoretically, we showed that our methods brings greater versatility than previous approaches and can help gaining insight into  popular manifold learning methods such as diffusion maps, PHATE, and SNE. Experimentally, we demonstrated that it also results in better geodesic distance preservation and excels both at clustering and preserving the structure of a continuous manifold. This contrasts with previous methods that are typically only effective at a single of these tasks. 

Despite the strong theoretical and empirical properties, our work presents some limitations. For instance, our method is based on a similarity measure, which is a relaxation of a distance metric. Additionally, the Harnack equation suggests that our parameters for the volume correction could be tuned depending on the underlying manifold. We envision that further analysis of this regularization is a fruitful direction for future work.







\begin{ack}
This research was enabled in part by compute resources provided by Mila (mila.quebec). It was partially funded and supported by ESP \textit{Mérite} [G.H.], CIFAR AI Chair [G.W.], NSERC Discovery grant 03267 [G.W.], NIH grants (1F30AI157270-01, R01HD100035, R01GM130847) [G.W.,S.K.], NSF Career grant 2047856 [S.K.], the Chan-Zuckerberg Initiative grants CZF2019-182702 and CZF2019-002440 [S.K.], the Sloan Fellowship FG-2021-15883 [S.K.], and the Novo Nordisk grant GR112933 [S.K.]. The content provided here is solely the responsibility of the authors and does not necessarily represent the official views of the funding agencies. The funders had no role in study design, data collection \& analysis, decision to publish, or preparation of the manuscript.
\end{ack}


\bibliographystyle{plainnat}
\bibliography{tidy}

\newpage

\appendix

\addcontentsline{toc}{section}{Appendix} 
\part{Appendix} 
\parttoc 

\clearpage

\section{Theory and algorithm details}\label{app: Theory and algorithm details}

\subsection{Kernel preserving embeddings}

In this section, we attempt to create a generalized framework for dimensionality reduction methods. These methods often have been viewed as disparate or competing but here we show that many of them are related to one another given the right template for methodology comparison. In order to do this, we introduce a general definition suited for distance-preserving dimensionality reduction methods. With this definition, we can cast many dimensionality reduction methods within the same framework, and easily compare them. We recall that the observations in the ambient space are denoted $x$, and those in the embedded space are denoted $y$. The definition relies on kernel functions $H_t^x$ , $H_t^y$ defined respectively on the ambient and embedded spaces and on transformations $T^x$, $T^y$ applied to the kernels. We recall that a divergence $f:\R\times\R \to \R^+$ is such that $f(a, b) = 0$ if and only if $a = b$ and $f(a, a+\delta)$ is a positive semi-definite quadratic form for infinitesimal $\delta$.

\begin{definition}
We define a \textbf{kernel features preserving embedding} as an embedding which minimizes a loss $L$ between a transformation $T^x$ of the ambient space kernel $H_t^x$ and its embedded space counterpart
\begin{equation}\label{eq: kernel_pre_emb}
    L := f(T^x(H_t^x), T^y(H_{t'}^y)),
\end{equation}
where $f$ is any $C^2$ divergence on $\R_{\ge0}$.
\end{definition}


\begin{example}
    We formulate MDS as a kernel feature-preserving embedding. Suppose we want to preserve the Euclidean distance, we have $H_t^x(x_i,x_j) = \|x_i-x_j\|_2$, $H_t^y(y_i,y_j) = \|y_i-y_j\|_2$, $f(a,b) = \|a-b\|_2$, and $T^x=T^y=I$.
\end{example}

In the following, we present popular dimensionality reduction methods that are kernel features preserving embeddings. With this definition, we can distinguish between methods that a preserve a kernel via affinities or distances. For the methods considered in this work, $H^x_t$ is an affinity kernel, but its construction varies from one method to another. In PHATE and Diffusion maps, $H^x_t$ is a random walk $\mP$, while in \method we use the heat kernel $\mH_t$. t-SNE defines $H_t^x$ as a symmetrized random walk matrix from a Gaussian kernel, while UMAP uses an unnormalized version. Methods such as PHATE and diffusion maps define a new distance matrix from a kernel in the ambient space and preserve these distances in the embedded space. Other methods like t-SNE and UMAP define similarities from a kernel and aim to preserve these similarities in the ambient space and embedded space via an entropy-based loss. We note the Kullback–Leibler divergence $\KL(a,b) = \sum_{ij}a_{ij}\log[a_{ij}/b_{ij}]$.

\begin{proposition}
    The embeddings methods HeatGeo, PHATE, Diffusion Maps, SNE, t-SNE, and UMAP are kernel feature-preserving embeddings.
\end{proposition}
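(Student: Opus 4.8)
The plan is to prove the proposition by verifying, for each of the six methods in turn, that it fits the template of \cref{eq: kernel_pre_emb}: that is, exhibiting an ambient kernel $H_t^x$, an embedded kernel $H_{t'}^y$, transformations $T^x, T^y$, and a $C^2$ divergence $f$ on $\R_{\ge0}$ such that the method's embedding objective coincides with $f(T^x(H_t^x), T^y(H_{t'}^y))$. The whole statement is essentially a unification claim, so the work is entirely in matching up definitions rather than in a single hard deduction. I would begin by fixing a common notational scaffold — recalling from \cref{sec: related work} the potential distance for PHATE, the diffusion distance for diffusion maps, and the $\KL$-based losses for SNE, t-SNE, and UMAP — and then handle the methods in two groups, the distance-preserving ones and the affinity-preserving ones.

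For the distance-preserving group (\methodshort, PHATE, diffusion maps), the template instantiates with $f$ the metric MDS loss from \cref{eq: mds_loss}, $T^y = I$ with $H_{t'}^y$ the Euclidean distance in the embedding (exactly as in the MDS example already worked out in the excerpt), and $H_t^x$ the relevant ambient kernel with $T^x$ the transformation turning that kernel into the method's dissimilarity. For \methodshort, $H_t^x = \mH_t$ and $T^x$ is the map $h \mapsto [-4t\log h - \sigma 4t \log(\cdot)]^{1/2}$ giving $d_t$; for diffusion maps, $H_t^x = \mP$ with $T^x$ producing the diffusion distance $DM_{\mP}$; for PHATE, $H_t^x = \mP$ with $T^x$ producing the potential distance $PH_{\mP}$. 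In each case I would note that the composed map $T^x(H_t^x)$ is precisely the dissimilarity matrix that MDS consumes, and that metric MDS is a $C^2$ divergence in the required sense, so the identification is immediate.

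For the affinity-preserving group (SNE, t-SNE, UMAP), I would take $f = \KL$ (for SNE and t-SNE) and $f = \KL(a,b) + \KL(1-a, 1-b)$ for UMAP, which matches the loss recalled in \cref{sec: related work}; here $T^x = T^y = I$, $H_t^x$ is the ambient affinity $p_{ij}$ (a symmetrized Gaussian-based random walk for SNE/t-SNE, an unnormalized variant for UMAP), and $H_{t'}^y$ is the embedded affinity $q_{ij}$ (Gaussian for SNE, Student-$t$ for t-SNE, the UMAP kernel for UMAP). The only subtlety is confirming that each candidate $f$ genuinely satisfies the divergence definition stated just above \cref{eq: kernel_pre_emb} — vanishing iff its arguments agree and having positive semidefinite quadratic form under infinitesimal perturbation; for $\KL$ this is the standard fact that its Hessian is the Fisher information, and the UMAP variant is a sum of two such terms, so it inherits the property.

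The main obstacle is not any one computation but ensuring the template is honestly satisfied uniformly: in particular, the transformations $T^x$ must be allowed to be matrix-to-matrix maps (the dissimilarity constructions are not pointwise in an obvious way for the triplet-distance variants), and one must check that the MDS objective and the $\KL$ objectives both literally match the form $f(T^x(H_t^x), T^y(H_{t'}^y))$ rather than merely resembling it. I expect the trickiest bookkeeping to be PHATE and \methodshort once the triplet interpolation of Alg.~\ref{alg:Heat_geo} is included, since there the ambient transformation folds a further $\DT$ step into $T^x$; I would either restrict the statement to the $\rho = 0$ endpoint or absorb the triplet map into $T^x$ explicitly, appealing to the relation already established in the PHATE proposition to justify that $T^x(\mH_t)$ remains a well-defined dissimilarity.
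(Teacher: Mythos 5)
Your proposal is correct and takes essentially the same route as the paper's proof: a case-by-case instantiation of the template, splitting the six methods into a distance-preserving group (with $H_t^y(y_i,y_j)=\|y_i-y_j\|_2$, $T^y=I$, and the $\|\cdot\|_2$ loss, where $T^x$ yields $-t\log H_t^x$ for \methodshort, the potential distance for PHATE, and the diffusion distance for diffusion maps) and an affinity-preserving group with $f=\KL$. The one point of divergence is UMAP, where you modify the divergence to $f(a,b)=\KL(a,b)+\KL(1-a,1-b)$ with $T^y=I$, whereas the paper keeps $f=\KL$ and instead absorbs the asymmetry into the embedded-space transformation $(T^y(H_1^y))_{ij}=(H_1^y)_{ij}/(1-(H_1^y)_{ij})$; both are legitimate instantiations of the template, and your version has the mild advantage of not needing to verify the divergence axioms for a ratio-transformed kernel, at the cost of checking them for the two-term $\KL$ sum (which you do). Your handling of the $\sigma>0$ Harnack term and the triplet step is also slightly more careful than the paper, which implicitly restricts to $\sigma=0$ and $\rho=0$ without comment.
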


\begin{proof} We assume that the affinity kernel in the ambient space $H_t^x$, is given, to complete the proof we need to define $f,H_t^y,T^x,T^y$ for all methods. 

We start with the distance preserving embeddings; HeatGeo, PHATE, and Diffusion Maps. For these methods, the kernel in the embed space is simply $H_t^y(y_i,y_j) = \|y_i-y_j\|_2$, without transformation, i.e. $T^y=I$. Since they preserve a distance, the loss is $f(T^x(H_t^x), T^y(H_{t'}^y))=\|H_t^x-H_{t'}^y \|_2$.

In the \method we apply a transformation on $H_t^x=\mH_t$ to define a dissimilarity, hence $T^x (H_t^x)=-t\log H_t^x$ (for $\sigma=0$), where $\log$ is applied elementwise. 

In PHATE, the potential distance is equivalent to $(T^x(H_t^x))_{ij} = \|- \log (H_t^x)_{i} +\log (H_t^x)_{j}\|_2$. In Diffusion Maps, the diffusion distance is $(T^x(H_t^x))_{ij} = \| (H_t^x)_{i} - (H_t^x)_{j}\|_2$.

SNE, t-SNE, and UMAP preserve affinities from a kernel. For these three methods, the loss is a divergence between distributions, namely $f=\KL$. They vary by defining different affinity kernel and transformation in the embedded space. SNE uses the unnormalized kernel $H_t^y(y_i,y_j) = \exp(-(1/t)\|y_i-y_j\|_2^2)$, with $T^x=T^y=I$. Whereas, t-SNE uses $(H_1^y)_{ij} = (1+\|y_i-y_j\|^2)^{-1}$, and $T^x=T^y=I$. UMAP define a pointwise transformation in the embedded space with $(H_1^y)_{ij} = (1+\|y_i-y_j\|^2)^{-1}$, $(T^y(H_t^y))_{ij} = (H_1^y)_{ij} / (1-(H_1^y)_{ij})$, and $T^x = I$.

We summarize the choice of kernels and functions in Tab.~\ref{tab:comp_kernel_methods}
\end{proof}


\begin{table}[]
\centering
\caption{Overview of kernel preserving methods.}
\label{tab:comp_kernel_methods}
\begin{tabular}{lllll}
\toprule
Method         & $H^y_t(y_i,y_j)$ & $T^x(H_t^x)$ & $T^y(H_t^y)$ & $f$ \\
\midrule
PHATE          &  $ \|y_i-y_j\|_2$  & $\|- \log (H_t^x)_{i} +\log (H_t^x)_{j}\|_2$   & $H_t^y$   &  $\|\cdot\|_2$ \\
Heat Geodesic  &  $\|y_i-y_j\|_2$  &   $- t\log (H_t^x)_{ij}$ &  $H_t^y$  &  $\|\cdot\|_2$ \\
Diffusion Maps &  $\|y_i-y_j\|_2$  &  $\| (H^x_t)_{i} - (H_t^x)_{j}\|_2$  &  $H_t^y$  &  $\|\cdot\|_2$ \\
SNE            &  $\exp(-(\frac{1}{t})\|y_i-y_j\|_2^2)$  &  $H_t^x$  &  $H_t^y$  & $\KL$  \\
t-SNE          &  $(1+\|y_i-y_j\|^2)^{-1}$  &  $H_t^x$  &  $H_t^y$  & $\KL$  \\
UMAP           &  $(1+\|y_i-y_j\|^2)^{-1}$  &  $H_t^x$  &  $\frac{(H_1^y)_{ij} }{ (1-(H_1^y)_{ij}) }$  & $\KL$ \\
\bottomrule
\end{tabular}
\end{table}

\subsection{Proofs}

\printProofs

\subsection{Algorithm details}
We present a detailed version of the \method algorithm in Alg.\ref{alg:Heat_geo_supp}. 
\begin{algorithm}[ht]
  \caption{\method}
  \label{alg:Heat_geo_supp}
\begin{algorithmic}[1]
\State {\bfseries Input:} $N \times d$ dataset matrix $\mX$, denoising parameter $\rho\in[0,1]$, Harnack regularization $\sigma > 0$, output dimension $k$.
\State {\bfseries Returns:} $N \times e$ embedding matrix $\mE$.
\LComment{1. Calculate Heat Operator $\mH_t$}
\If{$t$ is "auto"}
\State $t \gets \text{Kneedle} \{H(\mH_t)\}_t$
\Comment{Knee detection e.g. \cite{satopaa2011finding}}
\EndIf
\State $\mW \gets \mathrm{kernel}(\mX)$
\State $\mL \gets \mQ-\mW$
\If{Exact}
\State$\mH_t \gets \Psi e^{-t\Lambda}\Psi^T$
\Else
\State $\mH_t \gets p_K(\mL,t)$
\EndIf
\LComment{2. Calculate Pairwise Distances $\mD$}
\State $\mD \gets -4t\log \mH_t$
\Comment{$\log$ is applied elementwise}
\State $\mD \gets (1-\rho)\mD + \rho \DT$
\Comment{Triplet interpolation step}
\State Return $\mE \gets \mathrm{MetricMDS}(\mD, \| \cdot \|_2,k)$
\end{algorithmic}
\end{algorithm}

For the knee-point detection we use the Kneedle algorithm~\cite{satopaa2011finding}. It identifies a knee-point as a point where the curvature decreases maximally between points (using finite differences). We summarize the four main steps of the algorithm for a function $f(x)$, and we refer to \cite{satopaa2011finding} for additional details.
\begin{enumerate}
    \item Smoothing with a spline to preserve the shape of the function.
    \item Normalize the values, so the algorithm does not depend on the magnitude of the observations.
    \item Computing the set of finite differences for $x$ and $y:=f(x)$, e.g. $y_{d_i}:=f(x_i)-x_i$. 
    \item Evaluating local maxima of the difference curve $y_{d_i}$, and select the knee-point using a threshold based on the average difference between consecutive $x$.
\end{enumerate}

\section{Experiments and datasets details}\label{app: Experiments and datasets details}

Our experiments compare our approach with multiple state-of-the-art baselines for synthetic datasets (for which the true geodesic distance is known) and real-world datasets. For all models, we perform sample splitting with a 50/50 validation-test split. The validation and test sets each consist of 5 repetitions with different random initializations. The hyper-parameters are selected according to the performance on the validation set. We always report the results on the test set, along with the standard deviations computed over the five repetitions.
We use the following state-of-the-art methods in our experiments: our \method, \emph{diffusion maps}\cite{coifman_diffusion_2006}, \emph{PHATE}~\cite{moon_visualizing_2019}, \emph{Heat-PHATE} (a variation of PHATE using the Heat Kernel), \emph{Rand-Geo} (a variation of \method where we use the random walk kernel), \emph{Shortest-path} which estimates the geodesic distance by computing the shortest path between two nodes in a graph built on the point clouds, \emph{t-SNE}\cite{van2008visualizing}, and \emph{UMAP}\cite{mcinnes2018umap}.


\subsection{Datasets}

We consider two synthetic datasets, the well known Swiss roll and the tree datasets. The exact geodesic distance can be computed for these datasets. We additionally consider real-world datasets: PBMC, IPSC~\cite{moon_visualizing_2019}, EB~\cite{moon_visualizing_2019}, and two from the from the 2022 NeurIPS multimodal single-cell integration challenge\footnote{\url{https://www.kaggle.com/competitions/open-problems-multimodal/}}.

\subsubsection{Swiss Roll}

The Swiss roll dataset consists of data points samples on a smooth manifold inspired by shape of the famous alpine pastry. In its simplest form, it is a 2-dimensional surface embedded in $\mathbb{R}^3$ given by

\begin{align*}
 x &= t \cdot cos (t) \\
 y &= h \\
 z &= t \cdot sin(t)
\end{align*}

where $t \in [T_0,T_1]$ and $h \in [0,W]$. In our experiments we used $T_0=\frac{3}{2}\pi$, $T_1=\frac{9}{2}\pi$, and $W=5$. We use two sampling mechanisms for generating the data points : uniformly and clustered. In the first, we sample points uniformly at random in the $[T_0,T_1]\times [0,W]$ plane. In the second, we sample according to a mixture of isotropic multivariate Gaussian distributions in the same plane with equal weights, means $[(7,W/2),(12,W/2)]$, and standard deviations $[1,1]$. In the clustered case, data samples are given a label $y$ according to the Gaussian mixture component from which they were sampled.

We consider variations of the Swiss roll by projecting the data samples in higher dimension using a random rotation matrix sampled from the Haar distribution. We use three different ambient dimensions: 3, 10, and 50.

Finally, we add isotropic Gaussian noise to the data points in the ambient space with a standard deviation $\sigma$.

\subsubsection{Tree}

The tree dataset is created by generating $K$ branches from a $D$-dimensional Brownian motion that are eventually glued together. Each branch is sampled from a multidimensional Brownian motion $d\mathbf{X_k} = 2 d\mathbf{W}(t)$ at times $t=0,1,2,...,L-1$ for $k \in [K]$.
The first branch is taken as the main branch and the remaining branches are glued to the main branch by setting $X_k = X_k + X_0[i_k]$ where $i_k$ is a random index of the main branch vector. The total number of samples is thus $L \cdot K$

In our experiments, we used $L=500$, $K=5$, and $D=5,10$ (\emph{i.e.,} two versions with different dimensions of the ambient space). 

\subsection{Evaluation Metrics}

We compare the performance of the different methods according to several metrics. For synthetic datasets, where ground truth geodesic distance is available, we directly compare the estimated distance matrices and ground truth geodesic distance matrices. For real-world datasets, we use clustering quality and continuous interpolation as evaluation metrics.

\subsubsection{Distance matrix evaluation}

The following methods use an explicit distance matrix: diffusion maps, \method, Heat-Phate, Phate, Rand-Geo and Shortest Path. For these methods, we compare their ability their ability to recover the ground truth distance matrix several metrics. Letting $D$ and $\hat{D}$ the ground truth and inferred distance matrices respectively, and $N$ the number of points in the dataset, we use the following metrics.

\paragraph{Pearson $\rho$}

We compute the average Pearson correlation between the rows of the distance matrices, $\frac{1}{N} \sum_{i=1}^{N} r_{D_i,\hat{D}_i}$,  where $r_{x,y}$ is the Pearson correlation coefficient between vectors $x$ and $y$. $D_i$ stands for the $i$-th row of $D$.

\paragraph{Spearman $\rho$}

We compute the average Spearman correlation between the rows of the distance matrices, $\frac{1}{N} \sum_{i=1}^{N} r_{D_i,\hat{D}_i}$,  where $r_{x,y}$ is the Spearman correlation coefficient between vectors $x$ and $y$. $D_i$ stands for the $i$-th row of $D$.

\paragraph{Frobenius Norm}

We use $\lVert D-\hat{D}\rVert_F$, where $\lVert A\rVert_F = \sqrt{\sum_{i=1}^{N} \sum_{j=1}^{N} \lvert A_{i,j} \rvert^2}$

\paragraph{Maximum Norm}

We use $\lVert D-\hat{D}\rVert_{\infty}$, where $\lVert A\rVert_{\infty} = max_{i,j} \lvert A_{i,j} \rvert$

\subsubsection{Embedding evaluation}

Some methods produce low-dimensional embeddings without using an explicit distance matrix for the the data points. This is the case for UMAP and t-SNE. To compare against these methods, we use the distance matrix obtained by considering euclidean distance between the low-dimensional embeddings. We used 2-dimensional embeddings in our experiments. For diffusion maps, we obtain these embeddings by using the first two eigenvectors of the diffusion operator only. For \method, Heat-PHATE, PHATE, Rand-GEO and Shortest Path, we use multidimensional scaling (MDS) on the originally inferred distance matrix.

\paragraph{Clustering}

We evaluate the ability of \method to create meaningful embeddings when clusters are present in the data. To this end, we run a k-means clustering on the two dimensional embeddings obtained with each method and compare them against the ground truth labels. For the Tree dataset, we use the branches as clusters. For the Swiss roll dataset, we sample data points on the manifold according to a mixture of Gaussians and use the mixture component as the ground truth cluster label.

\paragraph{Interpolation}

To quantitatively evaluate the quality of the continuous embeddings, we first embed the entire dataset and obfuscate all samples from a particular time point (\emph{e.g.,} $t=2$). We then estimate the distribution of the missing time point by using displacement interpolation \citep{villani2009displacement} between the adjacent time points (\emph{e.g.,} $t=1$ and $t=3$). We report the Earth Mover Distance (EMD) between the predicted distribution and true distribution. A low EMD suggests that the obfuscated embeddings are naturally located between the previous and later time points, and that the generated embedding captures the temporal evolution of the data adequately.

\subsection{Hyperparameters}

In Table~\ref{tab:hyperparams}, we report the values of hyperparameters used to compute the different embeddings.

\begin{table}[htbp]
    \centering
    \begin{tabular}{lcc}
    \toprule
      Hyperparameter   & Description & Values \\
      \midrule
        \multicolumn{3}{c}{\method} \\
    \midrule
        k & Number of neighbours in k-NN graph & 5,10,15 \\
        order & order of the approximation & 30 \\
        $t$ & Diffusion time & 0.1,1,10,50,auto \\
        Approximation method & Approximation method for Heat Kernel & Euler, Chebyshev \\
        Laplacian & Type of laplacian & Combinatorial \\
        Harnack $\rho$ & Harnack Regularization & 0,0.25,0.5,0.75,1,1.5\\
    \midrule
    \multicolumn{3}{c}{PHATE} \\
    \midrule
    n-PCA & Number of PCA components & 50,100 \\
    $t$ & Diffusion time & 1,5,10,20,auto \\
    $k$ & Number of neighbours & 10 \\
    \midrule
    \multicolumn{3}{c}{Diffusion Maps} \\
    \midrule
    k & Number of neighbours in k-NN graph & 5,10,15 \\
    $t$ & Diffusion time & 1,5,10,20 \\
    \midrule
    \multicolumn{3}{c}{Shortest Path} \\
    \midrule
    k & Number of neighbours in k-NN graph & 5,10,15 \\
    \midrule
    \multicolumn{3}{c}{UMAP} \\
    \midrule
    k & Number of neighbours & 5,10,15 \\
    min-dist & Minimum distance & 0.1,0.5,0.99 \\
    \midrule
    \multicolumn{3}{c}{t-SNE} \\
    \midrule
    p & Perplexity & 10,30,100 \\
    early exageration & Early exageration parameter & 12 \\
    \bottomrule
    \end{tabular}
    \caption{Hyperparameters used in our experiments}
    \label{tab:hyperparams}
\end{table}

\subsection{Hardware}

The experiments were performed on a compute node with 16 Intel Xeon Platinum 8358 Processors and 64GB RAM.

\section{Additional results}\label{app:Additional results}

\subsection{HeatGeo weighted}
Following Sec.~\ref{sec: relation_other_methods}, we know that weighting the MDS loss by the heat kernel corresponds to a specific parametrization of SNE, and thus promote the identification of cluster. In Fig.~\ref{fig:tsne_HeatGeo}, we show the embeddings of four Gaussian distributions in 10 dimensions (top), and the PBMC dataset (bottom). The reference embedding is using t-SNE, as it models as it also minimizes the KL between the ambient and embedded distributions. We see that HeatGeo weighted form cluster that are shaped like a Gaussian. This is expected as Prop.~\ref{prop: sne_heatgeo}, indicates that this is equivalent to minimizing the $\KL$ between the heat kernel and a Gaussian affinity kernel.

\begin{figure}[htbp]
    \centering
    \includegraphics[width=0.9\columnwidth]{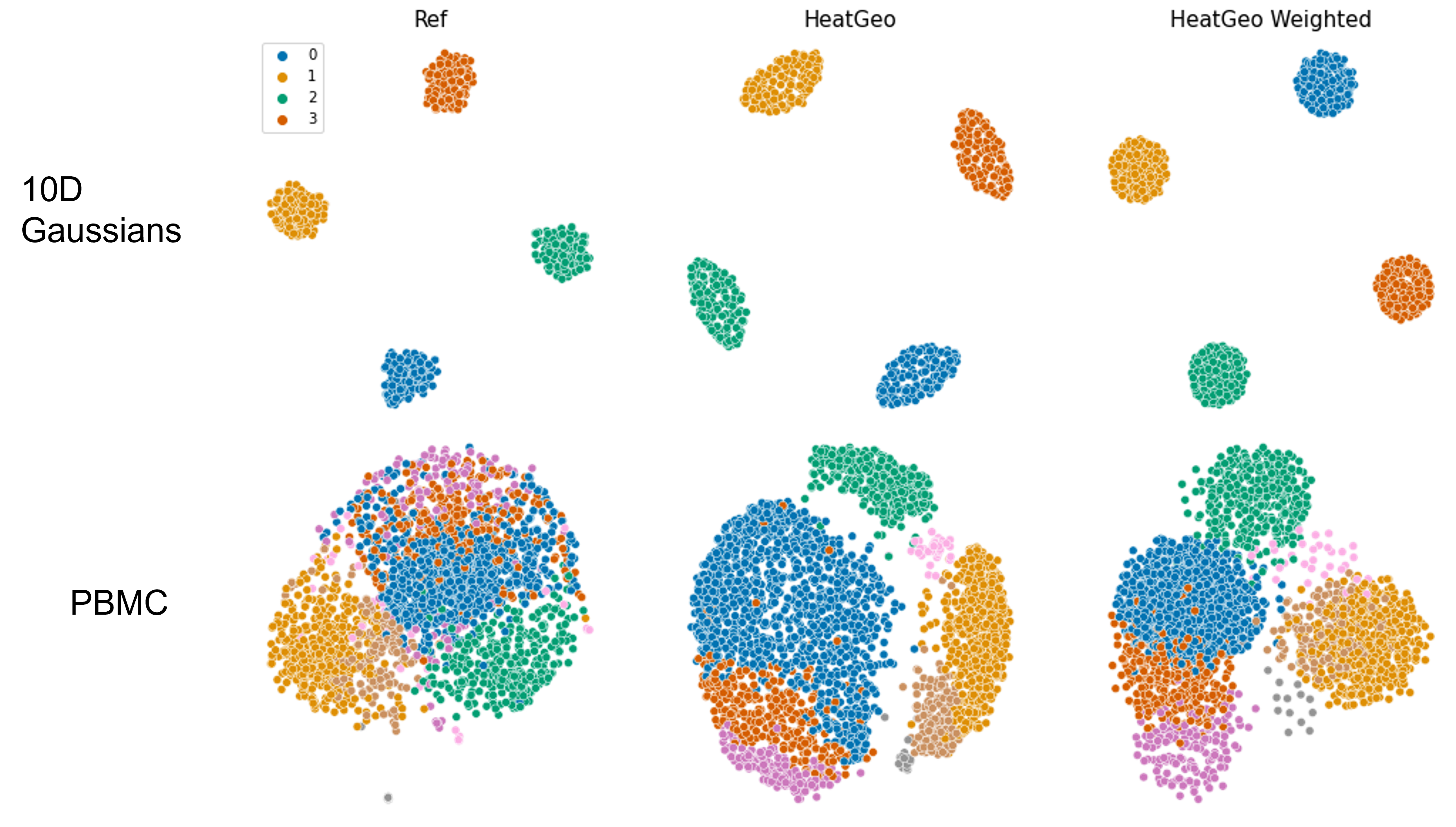}
    \caption{Embeddings of four Gaussian distributions in 10 dimensions (top), and the PBMC dataset (bottom). \methodshort with weight is equivalent to minimizing the $\KL$ between the heat kernel and a Gaussian affinity kernel, hence produces clusters shaped similar to a Gaussian.}
    \label{fig:tsne_HeatGeo}
\end{figure}

\subsection{Truncated distance}
\label{app:kernel_approx}

In Fig.\ref{fig:kernel_truncation}, we discretize the interval $[0,51]$ in 51 nodes, and we compute the heat-geodesic distance of the midpoint with respect to the other points, effectively approximating the Euclidean distance. Using Chebyshev polynomials of degree of 20, we see that the impact of the truncation is greater as the diffusion time increases. The backward Euler methods does not result in a truncated distance.


\begin{figure}[htbp]
    \centering
    \includegraphics[width=0.9\columnwidth]{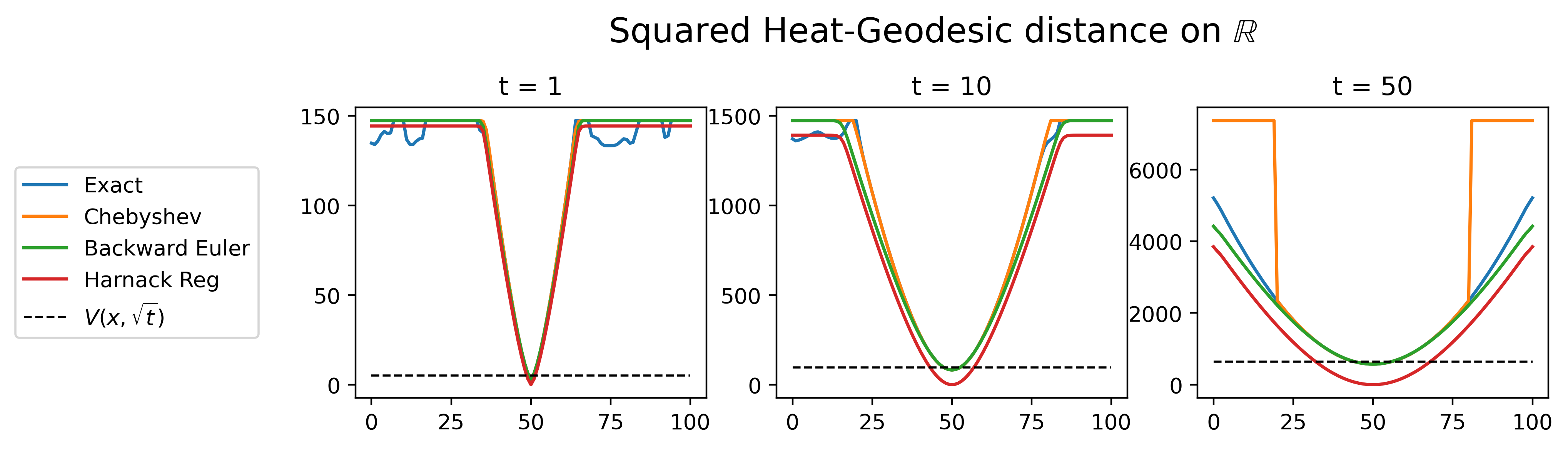}
    \caption{Approximation of the squared Euclidean distance with the Heat-geodesic for the exact computation, Backward Euler approximation, and Chebyshev polynomials. For larger diffusion time, the Chebyshev approximation results in a thresholded distance. The Harnack regularization unsures $d_t(x,x)=0$.}
    \label{fig:kernel_truncation}
\end{figure}

\begin{figure}[htbp]
    \centering
    \includegraphics[width=0.9\columnwidth]{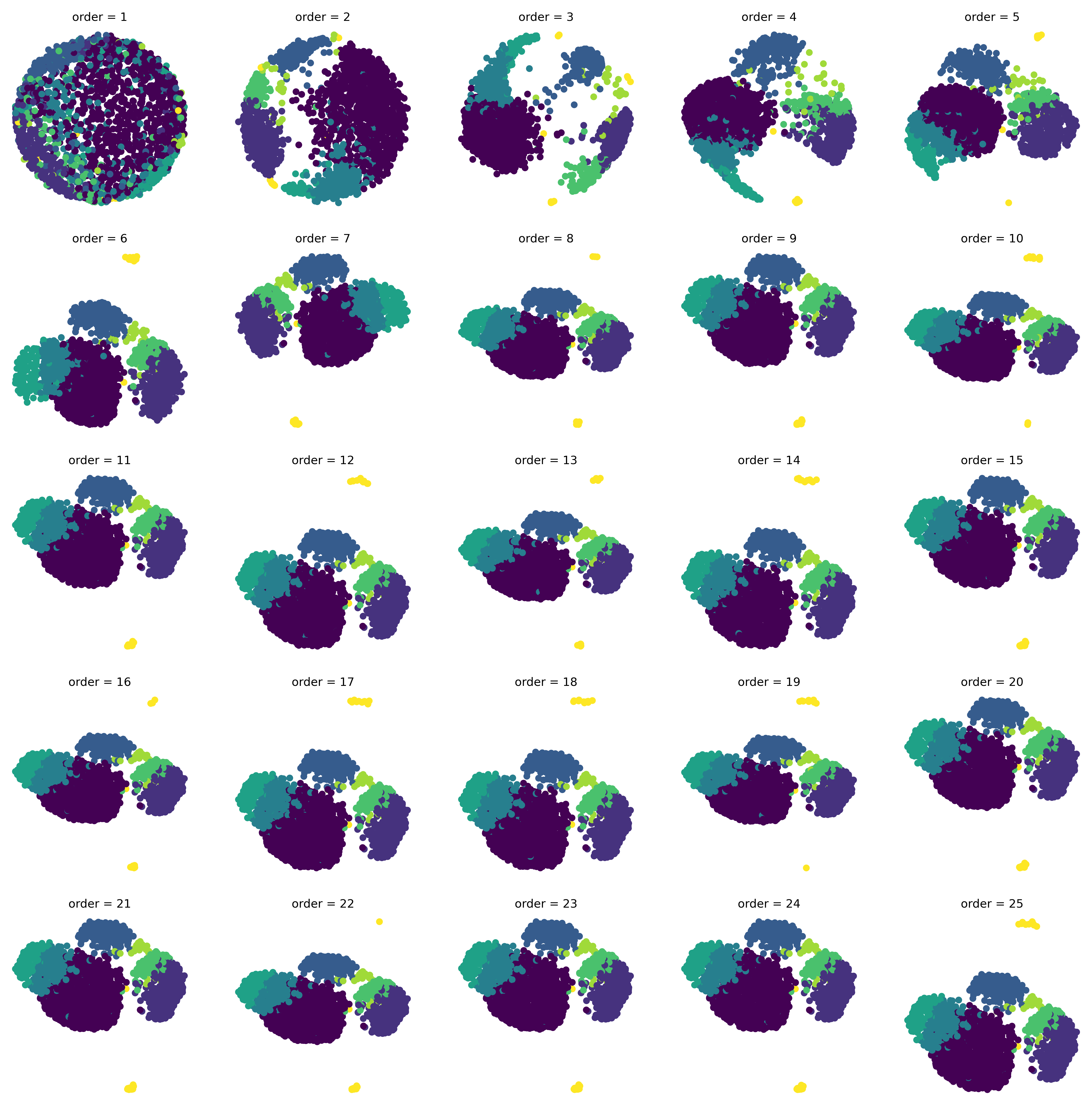}
    \caption{Impact of the Checbyshev approximation order on the embedding of \methodshort~for the PBMC dataset.}
    \label{fig:my_label}
\end{figure}

\subsection{Harnack inequality}
\label{app:harnack}

For complete Riemannian manifolds that satisfy the parabolic Harnack inequality (PHI) we have $h_t(x,y) \simeq V^{-1}(x,\sqrt{t}) ~ e^{ -d(x,y)^2/ t}$ so that $-t  \log h_t(x,y) \simeq t \log V(x, \sqrt{t}) + d^2(x,y)$ \citep{saloff2010heat}. 

\begin{align}
h_t(x,x) = \frac{1}{V(x,\sqrt{t})} \\
V(x,\sqrt{t}) = h_t(x,x)^{-1}
\end{align}

We then have,

\begin{align*}
   d^2(x,y) &\simeq  -t  \log h_t(x,y) - t \log V(x, \sqrt{t}) \\
   d^2(x,y) &\simeq  -t  \log h_t(x,y) - t \log h_t(x,x)^{-1} \\
   d^2(x,y) &\simeq  -t  \log h_t(x,y) + t \log h_t(x,x)
\end{align*}

\subsubsection{Case studies for specific manifolds}

\paragraph{The circle - $\mathbb{S}_1$}

We now show that our expression for the \method-distance is monotonically increasing with respect to the ground truth geodesic distance $d \in \mathbb{R}^+$ for a fixed diffusion time $t$ and for any Harnack regularization in $\mathbb{S}_1$. Therefore, the 

Our expression for the \method-distance is 

\begin{align*}
    \hat{d} = \sqrt{-4t \log(h_t(d)) + 4t \log(h_t(0))}
\end{align*}

As the square-root is monotonic, and $4t \log h_t(0)$ is constant with respect to $d$,
we need to show that $f(d)=-log(h_t(d))$ is monotonically increasing.

For $\mathbb{S}_1$, we have 

\begin{align*}
    h_t(d) &= \sum_{m\in \mathbb{Z}} \frac{1}{\sqrt{4\pi t}} e^{-\frac{(d+2\pi m)^2 }{4t}}\\
\end{align*}

As log is monotonically increasing, it suffices to show that $\sum_{m\in \mathbb{Z}} e^{-\frac{(d+2\pi m)^2 }{4t}}$ is monotonically \emph{decreasing},  which is the case as for any $d'>d$, $\forall m \in \mathbb{Z}$, we have

\begin{align*}
    e^{-\frac{(d+2\pi m)^2 }{4t}} > e^{-\frac{(d'+2\pi m)^2 }{4t}}.
\end{align*}

In general, one can see that (1) the heat kernel depending only on the geodesic distance and (2) the heat kernel being monotonically decreasing with respect to the geodesic distance are sufficient conditions for preserving ordering of pair-wise distances with \method.

\paragraph{The sphere - $\mathbb{S}_n$}

The above result can be applied to the higher-dimensional sphere $\mathbb{S}_n$. It is known that the heat kernel on manifold of constant curvatures is a function of the the geodesic distance ($d$) and time only. For $\mathbb{S}_n$ the heat kernel is given by

\begin{align*}
    h_t(x,y) &= \sum_{l=0}^{\infty} e^{-l(l+n)-2t} \frac{2l+n-2}{n-2}C_{l}^{\frac{n}{2}-1}(cos(d))\\
\end{align*}

with $I$ the regularized incomplete beta function and $C$ the Gegenbauer polynomials.

Furthermore, \citet{nowak2019sharp} showed that the heat kernel of the sphere is monotonically decreasing. The distance inferred from \method thus preserves ordering of the pair-wise distances.

\paragraph{Euclidean ($\mathbb{R}^3$)}

For the euclidean space, we have for the volume of $\sqrt{t}$-geodesic ball and for the heat kernel:

\begin{align*}
    V_{\sqrt{t}} = \frac{4}{3}\pi t^{3/2} \\
    h_t(x,y) = \frac{1}{(4\pi t)^{3/2}} e^{-\frac{\rho^2}{4t}}.
\end{align*}

Recalling Harnack inequality,

\begin{align*}
    \frac{c_1}{V(x,\sqrt{t})} e^{-\frac{d(x,y)^2}{c_2t}} \leq h_t(x,y) \leq \frac{c_3}{V(x,\sqrt{t})} e^{-\frac{d(x,y)^2}{c_4t}}
\end{align*}

With $c_2=c_4=4$, we have 

\begin{align*}
    \frac{c_1}{V(x,\sqrt{t})}  \leq \frac{1}{(4\pi t)^{3/2}} \leq \frac{c_3}{V(x,\sqrt{t})}
\end{align*}

In this case, the bound can be made tight, by setting 

\begin{align*}
c_1=c_3 &= \frac{V(x,\sqrt{t})}{(4\pi t)^{3/2}} \\
&= \frac{\frac{4}{3}\pi t^{3/2}}{(4\pi t)^{3/2}} \\
&= \frac{1}{3\sqrt{4\pi}} = \frac{1}{6\sqrt{\pi}},
\end{align*}

we recover the exact geodesic distance.

\subsection{Quantitative results}

\subsubsection{Distance matrix evaluation}

We report the performance of the different methods in terms of the ground truth geodesic matrix reconstruction in Table.~\ref{tab:benchmark-swiss} for the Swiss roll dataset and in Table.~\ref{tab:benchmark-tree}, for the Tree dataset.

\begin{table}[htbp]
\centering
\begin{adjustbox}{width=\linewidth}

\begin{tabular}{lrlllll}
\toprule
                data &  Noise level &        Method &                   PearsonR &                  SpearmanR &                Norm Fro N2 &                Norm inf N2  \\
\midrule
          Swiss roll &          0.1 & Diffusion Map &           $0.974 \pm 0.01$ &          $0.983 \pm 0.007$ &            $0.018 \pm 0.0$ &            $0.026 \pm 0.0$  \\
          Swiss roll &          0.1 &      Heat-Geo &          $0.992 \pm 0.003$ &          $0.995 \pm 0.002$ &            $0.002 \pm 0.0$ &            $0.003 \pm 0.0$  \\
          Swiss roll &          0.1 &    Heat-PHATE &           $0.99 \pm 0.002$ &          $0.997 \pm 0.001$ &          $0.079 \pm 0.002$ &            $0.1 \pm 0.003$  \\
          Swiss roll &          0.1 &         PHATE &          $0.621 \pm 0.006$ &            $0.58 \pm 0.01$ &            $0.022 \pm 0.0$ &            $0.026 \pm 0.0$  \\
          Swiss roll &          0.1 &      Rand-Geo &          $0.956 \pm 0.003$ &          $0.993 \pm 0.001$ &            $0.009 \pm 0.0$ &            $0.012 \pm 0.0$  \\
          Swiss roll &          0.1 & Shortest Path &     $\mathbf{1.0 \pm 0.0}$ &     $\mathbf{1.0 \pm 0.0}$ &     $\mathbf{0.0 \pm 0.0}$ &   $\mathbf{0.001 \pm 0.0}$  \\
\midrule
          Swiss roll &          0.5 & Diffusion Map &          $0.982 \pm 0.003$ &          $0.987 \pm 0.002$ &            $0.018 \pm 0.0$ &            $0.026 \pm 0.0$  \\
          Swiss roll &          0.5 &      Heat-Geo &          $0.994 \pm 0.002$ &          $0.996 \pm 0.001$ &            $0.002 \pm 0.0$ &            $0.004 \pm 0.0$  \\
          Swiss roll &          0.5 &    Heat-PHATE &          $0.993 \pm 0.001$ &            $0.998 \pm 0.0$ &          $0.064 \pm 0.001$ &          $0.083 \pm 0.002$  \\
          Swiss roll &          0.5 &         PHATE &          $0.649 \pm 0.007$ &          $0.615 \pm 0.006$ &            $0.023 \pm 0.0$ &            $0.028 \pm 0.0$  \\
          Swiss roll &          0.5 &      Rand-Geo &          $0.969 \pm 0.002$ &          $0.995 \pm 0.001$ &            $0.009 \pm 0.0$ &            $0.011 \pm 0.0$  \\
          Swiss roll &          0.5 & Shortest Path &   $\mathbf{0.999 \pm 0.0}$ &   $\mathbf{0.999 \pm 0.0}$ &   $\mathbf{0.001 \pm 0.0}$ &   $\mathbf{0.002 \pm 0.0}$  \\
\midrule
          Swiss roll &          1.0 & Diffusion Map &          $0.476 \pm 0.226$ &          $0.478 \pm 0.138$ &            $0.018 \pm 0.0$ &            $0.026 \pm 0.0$  \\
          Swiss roll &          1.0 &      Heat-Geo & $\mathbf{0.702 \pm 0.086}$ &   $\mathbf{0.7 \pm 0.073}$ &    $\mathbf{0.01 \pm 0.0}$ &   $\mathbf{0.012 \pm 0.0}$  \\
          Swiss roll &          1.0 &    Heat-PHATE &          $0.623 \pm 0.144$ &          $0.633 \pm 0.114$ &  $\mathbf{0.01 \pm 0.002}$ &          $0.019 \pm 0.004$  \\
          Swiss roll &          1.0 &         PHATE &           $0.457 \pm 0.01$ &          $0.404 \pm 0.024$ &            $0.024 \pm 0.0$ &            $0.028 \pm 0.0$  \\
          Swiss roll &          1.0 &      Rand-Geo &          $0.521 \pm 0.042$ &          $0.608 \pm 0.025$ &    $\mathbf{0.01 \pm 0.0}$ &            $0.014 \pm 0.0$  \\
          Swiss roll &          1.0 & Shortest Path &          $0.497 \pm 0.144$ &          $0.558 \pm 0.134$ &          $0.011 \pm 0.001$ &          $0.015 \pm 0.002$  \\
\midrule
     Swiss roll high &          0.1 & Diffusion Map &           $0.98 \pm 0.003$ &          $0.986 \pm 0.001$ &            $0.018 \pm 0.0$ &            $0.026 \pm 0.0$  \\
     Swiss roll high &          0.1 &      Heat-Geo &          $0.992 \pm 0.003$ &          $0.996 \pm 0.002$ &            $0.002 \pm 0.0$ &            $0.003 \pm 0.0$  \\
     Swiss roll high &          0.1 &    Heat-PHATE &          $0.991 \pm 0.002$ &          $0.997 \pm 0.001$ &          $0.079 \pm 0.002$ &          $0.101 \pm 0.004$  \\
     Swiss roll high &          0.1 &         PHATE &          $0.625 \pm 0.013$ &          $0.582 \pm 0.017$ &            $0.022 \pm 0.0$ &            $0.026 \pm 0.0$  \\
     Swiss roll high &          0.1 &      Rand-Geo &          $0.956 \pm 0.002$ &          $0.993 \pm 0.001$ &            $0.009 \pm 0.0$ &            $0.012 \pm 0.0$  \\
     Swiss roll high &          0.1 & Shortest Path &     $\mathbf{1.0 \pm 0.0}$ &     $\mathbf{1.0 \pm 0.0}$ &   $\mathbf{0.001 \pm 0.0}$ &   $\mathbf{0.002 \pm 0.0}$  \\
\midrule
     Swiss roll high &          0.5 & Diffusion Map &           $0.98 \pm 0.002$ &          $0.985 \pm 0.002$ &            $0.018 \pm 0.0$ &            $0.026 \pm 0.0$  \\
     Swiss roll high &          0.5 &      Heat-Geo &          $0.997 \pm 0.001$ &            $0.997 \pm 0.0$ &   $\mathbf{0.005 \pm 0.0}$ &   $\mathbf{0.007 \pm 0.0}$  \\
     Swiss roll high &          0.5 &    Heat-PHATE &            $0.995 \pm 0.0$ &            $0.997 \pm 0.0$ &          $0.041 \pm 0.001$ &          $0.054 \pm 0.002$  \\
     Swiss roll high &          0.5 &         PHATE &          $0.717 \pm 0.004$ &          $0.707 \pm 0.005$ &            $0.026 \pm 0.0$ &          $0.034 \pm 0.001$  \\
     Swiss roll high &          0.5 &      Rand-Geo &            $0.984 \pm 0.0$ &            $0.996 \pm 0.0$ &            $0.008 \pm 0.0$ &             $0.01 \pm 0.0$  \\
     Swiss roll high &          0.5 & Shortest Path &   $\mathbf{0.999 \pm 0.0}$ &   $\mathbf{0.998 \pm 0.0}$ &            $0.006 \pm 0.0$ &            $0.009 \pm 0.0$  \\
\midrule
     Swiss roll high &          1.0 & Diffusion Map &          $0.555 \pm 0.155$ &          $0.526 \pm 0.081$ &            $0.018 \pm 0.0$ &            $0.026 \pm 0.0$  \\
     Swiss roll high &          1.0 &      Heat-Geo & $\mathbf{0.705 \pm 0.065}$ & $\mathbf{0.695 \pm 0.052}$ &            $0.011 \pm 0.0$ &   $\mathbf{0.012 \pm 0.0}$  \\
     Swiss roll high &          1.0 &    Heat-PHATE &           $0.63 \pm 0.106$ &          $0.625 \pm 0.074$ &          $0.011 \pm 0.001$ &          $0.014 \pm 0.002$  \\
     Swiss roll high &          1.0 &         PHATE &          $0.473 \pm 0.026$ &          $0.419 \pm 0.024$ &            $0.027 \pm 0.0$ &          $0.039 \pm 0.001$  \\
     Swiss roll high &          1.0 &      Rand-Geo &           $0.563 \pm 0.05$ &          $0.644 \pm 0.033$ &    $\mathbf{0.01 \pm 0.0}$ &   $\mathbf{0.012 \pm 0.0}$  \\
     Swiss roll high &          1.0 & Shortest Path &           $0.384 \pm 0.02$ &          $0.461 \pm 0.017$ &            $0.011 \pm 0.0$ &            $0.015 \pm 0.0$  \\
\midrule
Swiss roll very high &          0.1 & Diffusion Map &          $0.977 \pm 0.005$ &          $0.984 \pm 0.004$ &            $0.018 \pm 0.0$ &            $0.026 \pm 0.0$  \\
Swiss roll very high &          0.1 &      Heat-Geo &          $0.992 \pm 0.002$ &          $0.996 \pm 0.001$ &   $\mathbf{0.002 \pm 0.0}$ &   $\mathbf{0.003 \pm 0.0}$  \\
Swiss roll very high &          0.1 &    Heat-PHATE &          $0.991 \pm 0.001$ &          $0.997 \pm 0.001$ &          $0.079 \pm 0.003$ &          $0.101 \pm 0.003$  \\
Swiss roll very high &          0.1 &         PHATE &           $0.631 \pm 0.01$ &          $0.594 \pm 0.011$ &            $0.023 \pm 0.0$ &          $0.028 \pm 0.001$  \\
Swiss roll very high &          0.1 &      Rand-Geo &          $0.957 \pm 0.002$ &          $0.994 \pm 0.001$ &            $0.009 \pm 0.0$ &            $0.012 \pm 0.0$  \\
Swiss roll very high &          0.1 & Shortest Path &   $\mathbf{0.999 \pm 0.0}$ &   $\mathbf{0.999 \pm 0.0}$ &            $0.006 \pm 0.0$ &            $0.007 \pm 0.0$  \\
\midrule
Swiss roll very high &          0.5 & Diffusion Map &          $0.978 \pm 0.002$ &          $0.984 \pm 0.001$ &            $0.018 \pm 0.0$ &            $0.026 \pm 0.0$  \\
Swiss roll very high &          0.5 &      Heat-Geo &            $0.997 \pm 0.0$ &   $\mathbf{0.998 \pm 0.0}$ &   $\mathbf{0.008 \pm 0.0}$ &             $0.01 \pm 0.0$  \\
Swiss roll very high &          0.5 &    Heat-PHATE &          $0.996 \pm 0.001$ &            $0.997 \pm 0.0$ &            $0.016 \pm 0.0$ &           $0.02 \pm 0.001$  \\
Swiss roll very high &          0.5 &         PHATE &          $0.815 \pm 0.002$ &          $0.823 \pm 0.004$ &            $0.032 \pm 0.0$ &          $0.049 \pm 0.002$  \\
Swiss roll very high &          0.5 &      Rand-Geo &            $0.986 \pm 0.0$ &            $0.996 \pm 0.0$ &   $\mathbf{0.008 \pm 0.0}$ &   $\mathbf{0.009 \pm 0.0}$  \\
Swiss roll very high &          0.5 & Shortest Path &   $\mathbf{0.998 \pm 0.0}$ &   $\mathbf{0.998 \pm 0.0}$ &          $0.019 \pm 0.001$ &          $0.027 \pm 0.001$  \\
\midrule
Swiss roll very high &          1.0 & Diffusion Map &          $0.324 \pm 0.061$ &          $0.399 \pm 0.033$ &            $0.018 \pm 0.0$ &            $0.026 \pm 0.0$  \\
Swiss roll very high &          1.0 &      Heat-Geo & $\mathbf{0.466 \pm 0.007}$ &          $0.506 \pm 0.006$ &            $0.011 \pm 0.0$ &            $0.013 \pm 0.0$  \\
Swiss roll very high &          1.0 &    Heat-PHATE &          $0.369 \pm 0.011$ &           $0.43 \pm 0.019$ &            $0.011 \pm 0.0$ &            $0.014 \pm 0.0$  \\
Swiss roll very high &          1.0 &         PHATE &          $0.377 \pm 0.011$ &          $0.425 \pm 0.009$ &            $0.036 \pm 0.0$ &          $0.062 \pm 0.004$  \\
Swiss roll very high &          1.0 &      Rand-Geo &          $0.398 \pm 0.009$ & $\mathbf{0.516 \pm 0.008}$ &    $\mathbf{0.01 \pm 0.0}$ &   $\mathbf{0.012 \pm 0.0}$  \\
Swiss roll very high &          1.0 & Shortest Path &          $0.367 \pm 0.018$ &          $0.443 \pm 0.016$ &            $0.012 \pm 0.0$ &            $0.015 \pm 0.0$  \\
\midrule
\bottomrule
\end{tabular}

\end{adjustbox}
\caption{Comparison of the estimated distance matrices with the ground truth geodesic distance matrices on the Swiss roll dataset. Best models on average are bolded (not necessarily significant).}
\label{tab:benchmark-swiss}
\end{table}

\begin{table}[htbp]
\centering
\begin{adjustbox}{width=\linewidth}

\begin{tabular}{lrlllll}
\toprule
                data &  Noise level &        Method &                   PearsonR &                  SpearmanR &                Norm Fro N2 &                Norm inf N2  \\
\midrule
                Tree &          1.0 & Diffusion Map &          $0.748 \pm 0.125$ &          $0.733 \pm 0.111$ &          $0.113 \pm 0.012$ &          $0.161 \pm 0.019$  \\
                Tree &          1.0 &      Heat-Geo & $\mathbf{0.976 \pm 0.019}$ &  $\mathbf{0.977 \pm 0.02}$ &          $0.092 \pm 0.011$ &          $0.135 \pm 0.018$  \\
                Tree &          1.0 &    Heat-PHATE &          $0.918 \pm 0.032$ &           $0.885 \pm 0.04$ &  $\mathbf{0.03 \pm 0.005}$ & $\mathbf{0.044 \pm 0.007}$  \\
                Tree &          1.0 &         PHATE &          $0.671 \pm 0.021$ &          $0.398 \pm 0.052$ &          $0.051 \pm 0.008$ &          $0.084 \pm 0.017$  \\
                Tree &          1.0 &      Rand-Geo &          $0.926 \pm 0.011$ &          $0.966 \pm 0.019$ &           $0.076 \pm 0.01$ &          $0.117 \pm 0.018$  \\
                Tree &          1.0 & Shortest Path &          $0.965 \pm 0.026$ &          $0.963 \pm 0.027$ &          $0.039 \pm 0.008$ &           $0.06 \pm 0.008$  \\
\midrule
                Tree &          5.0 & Diffusion Map &          $0.656 \pm 0.054$ &          $0.653 \pm 0.057$ &          $0.113 \pm 0.012$ &          $0.161 \pm 0.019$  \\
                Tree &          5.0 &      Heat-Geo & $\mathbf{0.822 \pm 0.008}$ & $\mathbf{0.807 \pm 0.016}$ &            $0.1 \pm 0.012$ &          $0.146 \pm 0.019$  \\
                Tree &          5.0 &    Heat-PHATE &          $0.765 \pm 0.025$ &          $0.751 \pm 0.023$ & $\mathbf{0.043 \pm 0.006}$ &   $\mathbf{0.08 \pm 0.01}$  \\
                Tree &          5.0 &         PHATE &          $0.766 \pm 0.023$ &          $0.743 \pm 0.028$ &          $0.055 \pm 0.007$ &          $0.093 \pm 0.008$  \\
                Tree &          5.0 &      Rand-Geo &          $0.806 \pm 0.014$ &          $0.795 \pm 0.018$ &          $0.094 \pm 0.011$ &          $0.139 \pm 0.018$  \\
                Tree &          5.0 & Shortest Path &           $0.78 \pm 0.009$ &          $0.757 \pm 0.019$ &          $0.075 \pm 0.009$ &          $0.117 \pm 0.014$  \\
\midrule
                Tree &         10.0 & Diffusion Map &           $0.538 \pm 0.05$ &          $0.471 \pm 0.089$ &          $0.113 \pm 0.012$ &          $0.161 \pm 0.019$  \\
                Tree &         10.0 &      Heat-Geo &           $0.62 \pm 0.025$ &  $\mathbf{0.59 \pm 0.033}$ &            $0.1 \pm 0.012$ &          $0.146 \pm 0.019$  \\
                Tree &         10.0 &    Heat-PHATE &  $\mathbf{0.63 \pm 0.018}$ &          $0.588 \pm 0.031$ & $\mathbf{0.046 \pm 0.005}$ & $\mathbf{0.083 \pm 0.012}$  \\
                Tree &         10.0 &         PHATE &          $0.623 \pm 0.016$ &          $0.583 \pm 0.029$ &            $0.07 \pm 0.01$ &          $0.112 \pm 0.017$  \\
                Tree &         10.0 &      Rand-Geo &          $0.578 \pm 0.043$ &          $0.558 \pm 0.053$ &          $0.095 \pm 0.011$ &           $0.14 \pm 0.018$  \\
                Tree &         10.0 & Shortest Path &          $0.539 \pm 0.041$ &          $0.513 \pm 0.055$ &           $0.072 \pm 0.01$ &          $0.118 \pm 0.017$  \\
\midrule
           Tree high &          1.0 & Diffusion Map &          $0.754 \pm 0.049$ &          $0.741 \pm 0.057$ &          $0.267 \pm 0.021$ &          $0.369 \pm 0.026$  \\
           Tree high &          1.0 &      Heat-Geo &          $0.996 \pm 0.001$ & $\mathbf{0.999 \pm 0.001}$ &           $0.242 \pm 0.02$ &          $0.338 \pm 0.026$  \\
           Tree high &          1.0 &    Heat-PHATE &          $0.927 \pm 0.011$ &          $0.875 \pm 0.032$ &          $0.062 \pm 0.003$ &          $0.084 \pm 0.006$  \\
           Tree high &          1.0 &         PHATE &          $0.528 \pm 0.085$ &          $0.141 \pm 0.061$ &          $0.209 \pm 0.023$ &          $0.307 \pm 0.027$  \\
           Tree high &          1.0 &      Rand-Geo &           $0.85 \pm 0.014$ &          $0.944 \pm 0.011$ &           $0.227 \pm 0.02$ &          $0.323 \pm 0.025$  \\
           Tree high &          1.0 & Shortest Path & $\mathbf{0.998 \pm 0.001}$ & $\mathbf{0.999 \pm 0.001}$ & $\mathbf{0.009 \pm 0.002}$ & $\mathbf{0.018 \pm 0.005}$  \\
\midrule
           Tree high &          5.0 & Diffusion Map &          $0.706 \pm 0.124$ &          $0.705 \pm 0.113$ &          $0.267 \pm 0.021$ &          $0.369 \pm 0.026$  \\
           Tree high &          5.0 &      Heat-Geo &   $\mathbf{0.97 \pm 0.01}$ & $\mathbf{0.975 \pm 0.009}$ &          $0.253 \pm 0.021$ &          $0.353 \pm 0.026$  \\
           Tree high &          5.0 &    Heat-PHATE &          $0.932 \pm 0.022$ &           $0.919 \pm 0.03$ & $\mathbf{0.072 \pm 0.004}$ & $\mathbf{0.112 \pm 0.008}$  \\
           Tree high &          5.0 &         PHATE &          $0.913 \pm 0.014$ &          $0.872 \pm 0.034$ &           $0.19 \pm 0.017$ &          $0.278 \pm 0.025$  \\
           Tree high &          5.0 &      Rand-Geo &           $0.968 \pm 0.01$ &          $0.971 \pm 0.009$ &          $0.245 \pm 0.019$ &          $0.342 \pm 0.024$  \\
           Tree high &          5.0 & Shortest Path &          $0.952 \pm 0.016$ &           $0.95 \pm 0.019$ &          $0.137 \pm 0.017$ &          $0.209 \pm 0.024$  \\
\midrule
           Tree high &         10.0 & Diffusion Map &          $0.598 \pm 0.117$ &          $0.613 \pm 0.103$ &          $0.267 \pm 0.021$ &          $0.369 \pm 0.026$  \\
           Tree high &         10.0 &      Heat-Geo & $\mathbf{0.861 \pm 0.039}$ &  $\mathbf{0.87 \pm 0.038}$ &          $0.254 \pm 0.021$ &          $0.353 \pm 0.026$  \\
           Tree high &         10.0 &    Heat-PHATE &           $0.844 \pm 0.05$ &          $0.838 \pm 0.051$ &          $0.168 \pm 0.015$ &           $0.27 \pm 0.025$  \\
           Tree high &         10.0 &         PHATE &          $0.837 \pm 0.052$ &          $0.838 \pm 0.049$ &          $0.204 \pm 0.018$ &          $0.301 \pm 0.024$  \\
           Tree high &         10.0 &      Rand-Geo &          $0.845 \pm 0.041$ &           $0.86 \pm 0.038$ &           $0.248 \pm 0.02$ &          $0.346 \pm 0.025$  \\
           Tree high &         10.0 & Shortest Path &          $0.779 \pm 0.051$ &          $0.777 \pm 0.054$ & $\mathbf{0.159 \pm 0.018}$ & $\mathbf{0.257 \pm 0.026}$  \\
\midrule
\bottomrule
\end{tabular}

\end{adjustbox}
\caption{Comparison of the estimated distance matrices with the ground truth geodesic distance matrices on the Tree roll dataset. Best models on average are bolded (not necessarily significant).}
\label{tab:benchmark-tree}
\end{table}

\subsubsection{Distance matrix evaluation via two-dimensional embeddings}

We report the performance of the different methods in terms of the ground truth geodesic matrix reconstruction in Table~\ref{tab:benchmark-embedding-swiss} for the Swiss roll dataset and in Table~\ref{tab:benchmark-embedding-tree}, for the Tree dataset.

\begin{table}[htbp]
\centering
\begin{adjustbox}{width=0.8\linewidth}

\begin{tabular}{lrlllll}
\toprule
                data &  Noise level &        Method &                   PearsonR &                  SpearmanR &                Norm Fro N2 &                Norm inf N2  \\
\midrule
          Swiss roll &          0.1 & Diffusion Map &           $0.974 \pm 0.01$ &          $0.983 \pm 0.007$ &            $0.018 \pm 0.0$ &            $0.026 \pm 0.0$  \\
          Swiss roll &          0.1 &      Heat-Geo &          $0.995 \pm 0.003$ &          $0.996 \pm 0.002$ &            $0.018 \pm 0.0$ &            $0.026 \pm 0.0$  \\
          Swiss roll &          0.1 &    Heat-PHATE &           $0.99 \pm 0.002$ &          $0.997 \pm 0.001$ &            $0.018 \pm 0.0$ &            $0.026 \pm 0.0$  \\
          Swiss roll &          0.1 &         PHATE &           $0.677 \pm 0.02$ &          $0.697 \pm 0.014$ &            $0.018 \pm 0.0$ &            $0.026 \pm 0.0$  \\
          Swiss roll &          0.1 &      Rand-Geo &          $0.917 \pm 0.003$ &          $0.915 \pm 0.002$ &            $0.018 \pm 0.0$ &            $0.026 \pm 0.0$  \\
          Swiss roll &          0.1 & Shortest Path &     $\mathbf{1.0 \pm 0.0}$ &     $\mathbf{1.0 \pm 0.0}$ &            $0.018 \pm 0.0$ &            $0.026 \pm 0.0$  \\
          Swiss roll &          0.1 &          TSNE &          $0.905 \pm 0.005$ &          $0.897 \pm 0.004$ &   $\mathbf{0.006 \pm 0.0}$ &   $\mathbf{0.008 \pm 0.0}$  \\
          Swiss roll &          0.1 &          UMAP &          $0.802 \pm 0.013$ &           $0.79 \pm 0.012$ &            $0.011 \pm 0.0$ &          $0.016 \pm 0.001$  \\
\midrule
          Swiss roll &          0.5 & Diffusion Map &          $0.982 \pm 0.003$ &          $0.987 \pm 0.002$ &            $0.018 \pm 0.0$ &            $0.026 \pm 0.0$  \\
          Swiss roll &          0.5 &      Heat-Geo &            $0.997 \pm 0.0$ &          $0.996 \pm 0.001$ &            $0.018 \pm 0.0$ &            $0.026 \pm 0.0$  \\
          Swiss roll &          0.5 &    Heat-PHATE &          $0.993 \pm 0.001$ &            $0.997 \pm 0.0$ &            $0.018 \pm 0.0$ &            $0.026 \pm 0.0$  \\
          Swiss roll &          0.5 &         PHATE &          $0.696 \pm 0.011$ &          $0.711 \pm 0.008$ &            $0.018 \pm 0.0$ &            $0.026 \pm 0.0$  \\
          Swiss roll &          0.5 &      Rand-Geo &          $0.932 \pm 0.002$ &          $0.932 \pm 0.002$ &            $0.018 \pm 0.0$ &            $0.026 \pm 0.0$  \\
          Swiss roll &          0.5 & Shortest Path &   $\mathbf{0.999 \pm 0.0}$ &   $\mathbf{0.999 \pm 0.0}$ &            $0.018 \pm 0.0$ &            $0.026 \pm 0.0$  \\
          Swiss roll &          0.5 &          TSNE &           $0.899 \pm 0.01$ &          $0.892 \pm 0.008$ &   $\mathbf{0.006 \pm 0.0}$ &   $\mathbf{0.008 \pm 0.0}$  \\
          Swiss roll &          0.5 &          UMAP &          $0.838 \pm 0.019$ &          $0.819 \pm 0.017$ &            $0.012 \pm 0.0$ &          $0.016 \pm 0.001$  \\
\midrule
          Swiss roll &          1.0 & Diffusion Map &          $0.476 \pm 0.226$ &          $0.478 \pm 0.138$ &            $0.018 \pm 0.0$ &            $0.026 \pm 0.0$  \\
          Swiss roll &          1.0 &      Heat-Geo &          $0.672 \pm 0.221$ &          $0.676 \pm 0.193$ &            $0.018 \pm 0.0$ &            $0.026 \pm 0.0$  \\
          Swiss roll &          1.0 &    Heat-PHATE &          $0.674 \pm 0.169$ &          $0.684 \pm 0.134$ &            $0.018 \pm 0.0$ &            $0.026 \pm 0.0$  \\
          Swiss roll &          1.0 &         PHATE &           $0.287 \pm 0.03$ &          $0.349 \pm 0.028$ &            $0.018 \pm 0.0$ &            $0.026 \pm 0.0$  \\
          Swiss roll &          1.0 &      Rand-Geo &           $0.39 \pm 0.029$ &           $0.43 \pm 0.022$ &            $0.018 \pm 0.0$ &            $0.026 \pm 0.0$  \\
          Swiss roll &          1.0 & Shortest Path &           $0.467 \pm 0.17$ &          $0.511 \pm 0.163$ &            $0.018 \pm 0.0$ &            $0.026 \pm 0.0$  \\
          Swiss roll &          1.0 &          TSNE &          $0.721 \pm 0.183$ & $\mathbf{0.724 \pm 0.151}$ & $\mathbf{0.008 \pm 0.002}$ & $\mathbf{0.014 \pm 0.003}$  \\
          Swiss roll &          1.0 &          UMAP & $\mathbf{0.727 \pm 0.181}$ &          $0.713 \pm 0.167$ &          $0.012 \pm 0.001$ &          $0.018 \pm 0.001$  \\
\midrule
          Swiss roll &          5.0 & Diffusion Map &          $0.157 \pm 0.021$ &          $0.173 \pm 0.015$ &            $0.018 \pm 0.0$ &            $0.026 \pm 0.0$  \\
          Swiss roll &          5.0 &    Heat-PHATE &          $0.203 \pm 0.014$ & $\mathbf{0.239 \pm 0.013}$ &            $0.018 \pm 0.0$ &            $0.026 \pm 0.0$  \\
          Swiss roll &          5.0 &         PHATE &          $0.201 \pm 0.014$ &          $0.237 \pm 0.013$ &            $0.018 \pm 0.0$ &            $0.026 \pm 0.0$  \\
          Swiss roll &          5.0 &      Rand-Geo &          $0.201 \pm 0.014$ &          $0.238 \pm 0.012$ &            $0.018 \pm 0.0$ &            $0.026 \pm 0.0$  \\
          Swiss roll &          5.0 & Shortest Path &            $0.2 \pm 0.011$ &           $0.233 \pm 0.01$ &            $0.018 \pm 0.0$ &            $0.026 \pm 0.0$  \\
          Swiss roll &          5.0 &          TSNE &            $0.2 \pm 0.011$ &           $0.233 \pm 0.01$ &   $\mathbf{0.012 \pm 0.0}$ & $\mathbf{0.018 \pm 0.001}$  \\
          Swiss roll &          5.0 &          UMAP & $\mathbf{0.205 \pm 0.013}$ & $\mathbf{0.239 \pm 0.012}$ &            $0.015 \pm 0.0$ &            $0.022 \pm 0.0$  \\
        \midrule
     Swiss roll high &          0.1 & Diffusion Map &           $0.98 \pm 0.003$ &          $0.986 \pm 0.001$ &            $0.018 \pm 0.0$ &            $0.026 \pm 0.0$  \\

     Swiss roll high &          0.1 &      Heat-Geo &          $0.996 \pm 0.002$ &          $0.997 \pm 0.001$ &            $0.018 \pm 0.0$ &            $0.026 \pm 0.0$  \\
     Swiss roll high &          0.1 &    Heat-PHATE &          $0.991 \pm 0.002$ &          $0.997 \pm 0.001$ &            $0.018 \pm 0.0$ &            $0.026 \pm 0.0$  \\
     Swiss roll high &          0.1 &         PHATE &          $0.678 \pm 0.027$ &          $0.698 \pm 0.019$ &            $0.018 \pm 0.0$ &            $0.026 \pm 0.0$  \\
     Swiss roll high &          0.1 &      Rand-Geo &          $0.917 \pm 0.003$ &          $0.915 \pm 0.002$ &            $0.018 \pm 0.0$ &            $0.026 \pm 0.0$  \\
     Swiss roll high &          0.1 & Shortest Path &     $\mathbf{1.0 \pm 0.0}$ &     $\mathbf{1.0 \pm 0.0}$ &            $0.018 \pm 0.0$ &            $0.026 \pm 0.0$  \\
     Swiss roll high &          0.1 &          TSNE &          $0.903 \pm 0.004$ &          $0.896 \pm 0.003$ &   $\mathbf{0.006 \pm 0.0}$ &   $\mathbf{0.008 \pm 0.0}$  \\
     Swiss roll high &          0.1 &          UMAP &          $0.806 \pm 0.014$ &           $0.794 \pm 0.01$ &            $0.011 \pm 0.0$ &          $0.016 \pm 0.001$  \\
     \midrule
     Swiss roll high &          0.5 & Diffusion Map &           $0.98 \pm 0.002$ &          $0.985 \pm 0.002$ &            $0.018 \pm 0.0$ &            $0.026 \pm 0.0$  \\
     Swiss roll high &          0.5 &      Heat-Geo &            $0.998 \pm 0.0$ &            $0.997 \pm 0.0$ &            $0.018 \pm 0.0$ &            $0.026 \pm 0.0$  \\
     Swiss roll high &          0.5 &    Heat-PHATE &            $0.995 \pm 0.0$ &            $0.997 \pm 0.0$ &            $0.018 \pm 0.0$ &            $0.026 \pm 0.0$  \\
     Swiss roll high &          0.5 &         PHATE &           $0.754 \pm 0.01$ &          $0.756 \pm 0.006$ &            $0.018 \pm 0.0$ &            $0.026 \pm 0.0$  \\
     Swiss roll high &          0.5 &      Rand-Geo &          $0.945 \pm 0.001$ &          $0.945 \pm 0.002$ &            $0.018 \pm 0.0$ &            $0.026 \pm 0.0$  \\
     Swiss roll high &          0.5 & Shortest Path &   $\mathbf{0.999 \pm 0.0}$ &   $\mathbf{0.998 \pm 0.0}$ &            $0.018 \pm 0.0$ &            $0.026 \pm 0.0$  \\
     Swiss roll high &          0.5 &          TSNE &          $0.905 \pm 0.006$ &          $0.899 \pm 0.003$ &   $\mathbf{0.006 \pm 0.0}$ &   $\mathbf{0.008 \pm 0.0}$  \\
     Swiss roll high &          0.5 &          UMAP &          $0.876 \pm 0.017$ &           $0.86 \pm 0.024$ &            $0.012 \pm 0.0$ &          $0.017 \pm 0.001$  \\
     \midrule
     Swiss roll high &          1.0 & Diffusion Map &          $0.555 \pm 0.155$ &          $0.526 \pm 0.081$ &            $0.018 \pm 0.0$ &            $0.026 \pm 0.0$  \\
     Swiss roll high &          1.0 &      Heat-Geo &          $0.643 \pm 0.173$ &          $0.693 \pm 0.114$ &            $0.018 \pm 0.0$ &            $0.026 \pm 0.0$  \\
     Swiss roll high &          1.0 &    Heat-PHATE &           $0.609 \pm 0.17$ &          $0.611 \pm 0.121$ &            $0.018 \pm 0.0$ &            $0.026 \pm 0.0$  \\
     Swiss roll high &          1.0 &         PHATE &          $0.271 \pm 0.025$ &          $0.343 \pm 0.011$ &            $0.018 \pm 0.0$ &            $0.026 \pm 0.0$  \\
     Swiss roll high &          1.0 &      Rand-Geo &           $0.41 \pm 0.038$ &           $0.446 \pm 0.03$ &            $0.018 \pm 0.0$ &            $0.026 \pm 0.0$  \\
     Swiss roll high &          1.0 & Shortest Path &          $0.343 \pm 0.013$ &            $0.4 \pm 0.007$ &            $0.018 \pm 0.0$ &            $0.026 \pm 0.0$  \\
     Swiss roll high &          1.0 &          TSNE &          $0.737 \pm 0.124$ &          $0.723 \pm 0.099$ & $\mathbf{0.008 \pm 0.001}$ & $\mathbf{0.015 \pm 0.003}$  \\
     Swiss roll high &          1.0 &          UMAP & $\mathbf{0.893 \pm 0.055}$ & $\mathbf{0.889 \pm 0.083}$ &          $0.014 \pm 0.001$ &           $0.02 \pm 0.001$  \\
     \midrule
     Swiss roll high &          5.0 & Diffusion Map &          $0.164 \pm 0.016$ &          $0.174 \pm 0.009$ &            $0.018 \pm 0.0$ &            $0.026 \pm 0.0$  \\
     Swiss roll high &          5.0 &    Heat-PHATE &  $\mathbf{0.202 \pm 0.01}$ & $\mathbf{0.236 \pm 0.009}$ &            $0.018 \pm 0.0$ &            $0.026 \pm 0.0$  \\
     Swiss roll high &          5.0 &         PHATE &           $0.201 \pm 0.01$ &          $0.234 \pm 0.008$ &            $0.018 \pm 0.0$ &            $0.026 \pm 0.0$  \\
     Swiss roll high &          5.0 &      Rand-Geo &          $0.192 \pm 0.009$ &          $0.228 \pm 0.008$ &            $0.018 \pm 0.0$ &            $0.026 \pm 0.0$  \\
     Swiss roll high &          5.0 & Shortest Path &           $0.187 \pm 0.01$ &          $0.221 \pm 0.009$ &            $0.018 \pm 0.0$ &            $0.026 \pm 0.0$  \\
     Swiss roll high &          5.0 &          TSNE &          $0.182 \pm 0.011$ &           $0.213 \pm 0.01$ &   $\mathbf{0.013 \pm 0.0}$ & $\mathbf{0.019 \pm 0.001}$  \\
     Swiss roll high &          5.0 &          UMAP &          $0.195 \pm 0.009$ &          $0.227 \pm 0.008$ &            $0.016 \pm 0.0$ &          $0.024 \pm 0.001$  \\
     \midrule
Swiss roll very high &          0.1 & Diffusion Map &          $0.977 \pm 0.005$ &          $0.984 \pm 0.004$ &            $0.018 \pm 0.0$ &            $0.026 \pm 0.0$  \\
Swiss roll very high &          0.1 &      Heat-Geo &          $0.996 \pm 0.001$ &          $0.997 \pm 0.001$ &            $0.018 \pm 0.0$ &            $0.026 \pm 0.0$  \\
Swiss roll very high &          0.1 &    Heat-PHATE &          $0.991 \pm 0.001$ &          $0.997 \pm 0.001$ &            $0.018 \pm 0.0$ &            $0.026 \pm 0.0$  \\
Swiss roll very high &          0.1 &         PHATE &          $0.683 \pm 0.023$ &          $0.701 \pm 0.016$ &            $0.018 \pm 0.0$ &            $0.026 \pm 0.0$  \\
Swiss roll very high &          0.1 &      Rand-Geo &          $0.918 \pm 0.002$ &          $0.917 \pm 0.002$ &            $0.018 \pm 0.0$ &            $0.026 \pm 0.0$  \\
Swiss roll very high &          0.1 & Shortest Path &   $\mathbf{0.999 \pm 0.0}$ &   $\mathbf{0.999 \pm 0.0}$ &            $0.018 \pm 0.0$ &            $0.026 \pm 0.0$  \\
Swiss roll very high &          0.1 &          TSNE &          $0.905 \pm 0.006$ &          $0.897 \pm 0.004$ &   $\mathbf{0.006 \pm 0.0}$ &   $\mathbf{0.008 \pm 0.0}$  \\
Swiss roll very high &          0.1 &          UMAP &          $0.785 \pm 0.024$ &          $0.781 \pm 0.017$ &            $0.011 \pm 0.0$ &          $0.016 \pm 0.001$  \\
\midrule
Swiss roll very high &          0.5 & Diffusion Map &          $0.978 \pm 0.002$ &          $0.984 \pm 0.001$ &            $0.018 \pm 0.0$ &            $0.026 \pm 0.0$  \\
Swiss roll very high &          0.5 &      Heat-Geo &            $0.997 \pm 0.0$ &   $\mathbf{0.998 \pm 0.0}$ &            $0.018 \pm 0.0$ &            $0.026 \pm 0.0$  \\
Swiss roll very high &          0.5 &    Heat-PHATE &          $0.996 \pm 0.001$ &            $0.997 \pm 0.0$ &            $0.018 \pm 0.0$ &            $0.026 \pm 0.0$  \\
Swiss roll very high &          0.5 &         PHATE &          $0.827 \pm 0.003$ &          $0.815 \pm 0.002$ &            $0.018 \pm 0.0$ &            $0.026 \pm 0.0$  \\
Swiss roll very high &          0.5 &      Rand-Geo &          $0.944 \pm 0.001$ &          $0.944 \pm 0.001$ &            $0.018 \pm 0.0$ &            $0.026 \pm 0.0$  \\
Swiss roll very high &          0.5 & Shortest Path &   $\mathbf{0.998 \pm 0.0}$ &            $0.997 \pm 0.0$ &            $0.018 \pm 0.0$ &            $0.026 \pm 0.0$  \\
Swiss roll very high &          0.5 &          TSNE &          $0.917 \pm 0.009$ &          $0.917 \pm 0.007$ &   $\mathbf{0.006 \pm 0.0}$ & $\mathbf{0.008 \pm 0.001}$  \\
Swiss roll very high &          0.5 &          UMAP &           $0.928 \pm 0.01$ &          $0.929 \pm 0.012$ &            $0.012 \pm 0.0$ &          $0.017 \pm 0.001$  \\
\midrule
Swiss roll very high &          1.0 & Diffusion Map &          $0.324 \pm 0.061$ &          $0.399 \pm 0.033$ &            $0.018 \pm 0.0$ &            $0.026 \pm 0.0$  \\
Swiss roll very high &          1.0 &      Heat-Geo &          $0.364 \pm 0.008$ &          $0.425 \pm 0.015$ &            $0.018 \pm 0.0$ &            $0.026 \pm 0.0$  \\
Swiss roll very high &          1.0 &    Heat-PHATE &          $0.352 \pm 0.022$ &          $0.411 \pm 0.018$ &            $0.018 \pm 0.0$ &            $0.026 \pm 0.0$  \\
Swiss roll very high &          1.0 &         PHATE &          $0.326 \pm 0.009$ &          $0.388 \pm 0.007$ &            $0.018 \pm 0.0$ &            $0.026 \pm 0.0$  \\
Swiss roll very high &          1.0 &      Rand-Geo &          $0.357 \pm 0.007$ &          $0.404 \pm 0.005$ &            $0.018 \pm 0.0$ &            $0.026 \pm 0.0$  \\
Swiss roll very high &          1.0 & Shortest Path &          $0.335 \pm 0.014$ &           $0.39 \pm 0.011$ &            $0.018 \pm 0.0$ &            $0.026 \pm 0.0$  \\
Swiss roll very high &          1.0 &          TSNE &          $0.515 \pm 0.014$ &           $0.522 \pm 0.01$ &   $\mathbf{0.012 \pm 0.0}$ &   $\mathbf{0.016 \pm 0.0}$  \\
Swiss roll very high &          1.0 &          UMAP & $\mathbf{0.765 \pm 0.059}$ & $\mathbf{0.737 \pm 0.058}$ &            $0.015 \pm 0.0$ &            $0.021 \pm 0.0$  \\
\midrule
Swiss roll very high &          5.0 & Diffusion Map &          $0.151 \pm 0.011$ &          $0.161 \pm 0.008$ &            $0.018 \pm 0.0$ &            $0.026 \pm 0.0$  \\
Swiss roll very high &          5.0 &    Heat-PHATE &          $0.175 \pm 0.009$ &          $0.208 \pm 0.009$ &            $0.018 \pm 0.0$ &            $0.026 \pm 0.0$  \\
Swiss roll very high &          5.0 &         PHATE & $\mathbf{0.181 \pm 0.006}$ & $\mathbf{0.212 \pm 0.006}$ &            $0.018 \pm 0.0$ &            $0.026 \pm 0.0$  \\
Swiss roll very high &          5.0 &      Rand-Geo &          $0.005 \pm 0.002$ &          $0.004 \pm 0.002$ &            $0.018 \pm 0.0$ &            $0.026 \pm 0.0$  \\
Swiss roll very high &          5.0 & Shortest Path &          $0.145 \pm 0.011$ &          $0.173 \pm 0.011$ &            $0.018 \pm 0.0$ &            $0.026 \pm 0.0$  \\
Swiss roll very high &          5.0 &          TSNE &          $0.155 \pm 0.008$ &          $0.188 \pm 0.008$ &   $\mathbf{0.015 \pm 0.0}$ & $\mathbf{0.022 \pm 0.001}$  \\
Swiss roll very high &          5.0 &          UMAP &          $0.155 \pm 0.003$ &          $0.183 \pm 0.005$ &            $0.017 \pm 0.0$ &            $0.024 \pm 0.0$  \\
\bottomrule
\end{tabular}

\end{adjustbox}
\caption{Comparison of the estimated distance matrices with the ground truth geodesic distance matrices on the Swiss roll dataset, using a two-dimensional embedding. Best models on average are bolded (not necessarily significant).}
\label{tab:benchmark-embedding-swiss}
\end{table}

\begin{table}[htbp]
\centering
\begin{adjustbox}{width=\linewidth}

\begin{tabular}{lrlllll}
\toprule
     data &  Noise level &        Method &                   PearsonR &                  SpearmanR &                Norm Fro N2 &                Norm inf N2  \\
\midrule
     Tree &          0.1 & Diffusion Map &          $0.748 \pm 0.125$ &          $0.733 \pm 0.111$ &          $0.113 \pm 0.012$ &          $0.161 \pm 0.019$  \\
     Tree &          0.1 &      Heat-Geo & $\mathbf{0.943 \pm 0.037}$ &  $\mathbf{0.94 \pm 0.037}$ &          $0.113 \pm 0.012$ &          $0.161 \pm 0.019$  \\
     Tree &          0.1 &    Heat-PHATE &           $0.872 \pm 0.04$ &           $0.83 \pm 0.061$ &          $0.113 \pm 0.012$ &          $0.161 \pm 0.019$  \\
     Tree &          0.1 &         PHATE &          $0.564 \pm 0.039$ &          $0.469 \pm 0.052$ &          $0.113 \pm 0.011$ &          $0.161 \pm 0.018$  \\
     Tree &          0.1 &      Rand-Geo &          $0.868 \pm 0.017$ &           $0.85 \pm 0.019$ &          $0.113 \pm 0.012$ &          $0.161 \pm 0.019$  \\
     Tree &          0.1 & Shortest Path &          $0.937 \pm 0.037$ &          $0.931 \pm 0.041$ &          $0.113 \pm 0.012$ &          $0.161 \pm 0.019$  \\
     Tree &          0.1 &          TSNE &          $0.847 \pm 0.034$ &          $0.824 \pm 0.045$ & $\mathbf{0.082 \pm 0.012}$ & $\mathbf{0.123 \pm 0.022}$  \\
     Tree &          0.1 &          UMAP &          $0.692 \pm 0.058$ &          $0.671 \pm 0.047$ &          $0.107 \pm 0.012$ &          $0.153 \pm 0.019$  \\
\midrule
     Tree &          0.5 & Diffusion Map &          $0.656 \pm 0.054$ &          $0.653 \pm 0.057$ &          $0.113 \pm 0.012$ &          $0.161 \pm 0.019$  \\
     Tree &          0.5 &      Heat-Geo & $\mathbf{0.806 \pm 0.019}$ & $\mathbf{0.787 \pm 0.009}$ &          $0.113 \pm 0.012$ &          $0.161 \pm 0.019$  \\
     Tree &          0.5 &    Heat-PHATE &          $0.746 \pm 0.024$ &          $0.744 \pm 0.031$ &          $0.113 \pm 0.012$ &          $0.161 \pm 0.019$  \\
     Tree &          0.5 &         PHATE &          $0.766 \pm 0.023$ &           $0.746 \pm 0.03$ &          $0.113 \pm 0.011$ &          $0.161 \pm 0.018$  \\
     Tree &          0.5 &      Rand-Geo &          $0.721 \pm 0.024$ &          $0.694 \pm 0.024$ &          $0.113 \pm 0.012$ &          $0.161 \pm 0.019$  \\
     Tree &          0.5 & Shortest Path &           $0.765 \pm 0.01$ &          $0.738 \pm 0.011$ &          $0.113 \pm 0.012$ &          $0.161 \pm 0.019$  \\
     Tree &          0.5 &          TSNE &          $0.795 \pm 0.046$ &          $0.766 \pm 0.055$ & $\mathbf{0.083 \pm 0.012}$ & $\mathbf{0.128 \pm 0.018}$  \\
     Tree &          0.5 &          UMAP &           $0.783 \pm 0.06$ &          $0.757 \pm 0.054$ &           $0.11 \pm 0.011$ &          $0.157 \pm 0.018$  \\
\midrule
     Tree &          1.0 & Diffusion Map &           $0.538 \pm 0.05$ &          $0.471 \pm 0.089$ &          $0.113 \pm 0.012$ &          $0.161 \pm 0.019$  \\
     Tree &          1.0 &      Heat-Geo &          $0.613 \pm 0.025$ &  $\mathbf{0.58 \pm 0.036}$ &          $0.113 \pm 0.012$ &          $0.161 \pm 0.019$  \\
     Tree &          1.0 &    Heat-PHATE &           $0.614 \pm 0.02$ &          $0.571 \pm 0.044$ &          $0.113 \pm 0.012$ &          $0.161 \pm 0.019$  \\
     Tree &          1.0 &         PHATE & $\mathbf{0.615 \pm 0.017}$ &          $0.572 \pm 0.036$ &          $0.113 \pm 0.011$ &          $0.161 \pm 0.018$  \\
     Tree &          1.0 &      Rand-Geo &          $0.487 \pm 0.064$ &          $0.465 \pm 0.071$ &          $0.113 \pm 0.012$ &          $0.161 \pm 0.019$  \\
     Tree &          1.0 & Shortest Path &          $0.542 \pm 0.047$ &           $0.514 \pm 0.06$ &          $0.113 \pm 0.012$ &          $0.161 \pm 0.019$  \\
     Tree &          1.0 &          TSNE &          $0.583 \pm 0.042$ &          $0.553 \pm 0.045$ & $\mathbf{0.086 \pm 0.011}$ & $\mathbf{0.135 \pm 0.017}$  \\
     Tree &          1.0 &          UMAP &          $0.595 \pm 0.032$ &          $0.562 \pm 0.036$ &          $0.111 \pm 0.011$ &          $0.158 \pm 0.019$  \\
\midrule
Tree high &          0.1 & Diffusion Map &          $0.754 \pm 0.049$ &          $0.741 \pm 0.057$ &          $0.267 \pm 0.021$ &          $0.369 \pm 0.026$  \\
Tree high &          0.1 &      Heat-Geo &          $0.956 \pm 0.014$ & $\mathbf{0.957 \pm 0.015}$ &          $0.267 \pm 0.021$ &          $0.369 \pm 0.026$  \\
Tree high &          0.1 &    Heat-PHATE &          $0.831 \pm 0.082$ &          $0.764 \pm 0.115$ &          $0.267 \pm 0.021$ &          $0.369 \pm 0.026$  \\
Tree high &          0.1 &         PHATE &          $0.484 \pm 0.036$ &            $0.4 \pm 0.028$ &           $0.267 \pm 0.02$ &          $0.369 \pm 0.025$  \\
Tree high &          0.1 &      Rand-Geo &          $0.817 \pm 0.013$ &          $0.774 \pm 0.022$ &          $0.267 \pm 0.021$ &          $0.369 \pm 0.026$  \\
Tree high &          0.1 & Shortest Path & $\mathbf{0.958 \pm 0.014}$ &          $0.956 \pm 0.017$ &          $0.267 \pm 0.021$ &          $0.369 \pm 0.026$  \\
Tree high &          0.1 &          TSNE &           $0.89 \pm 0.039$ &          $0.866 \pm 0.043$ & $\mathbf{0.233 \pm 0.021}$ & $\mathbf{0.327 \pm 0.026}$  \\
Tree high &          0.1 &          UMAP &            $0.8 \pm 0.031$ &          $0.764 \pm 0.034$ &          $0.259 \pm 0.021$ &           $0.36 \pm 0.028$  \\
\midrule
Tree high &          0.5 & Diffusion Map &          $0.706 \pm 0.124$ &          $0.705 \pm 0.113$ &          $0.267 \pm 0.021$ &          $0.369 \pm 0.026$  \\
Tree high &          0.5 &      Heat-Geo & $\mathbf{0.932 \pm 0.022}$ & $\mathbf{0.928 \pm 0.023}$ &          $0.267 \pm 0.021$ &          $0.369 \pm 0.026$  \\
Tree high &          0.5 &    Heat-PHATE &          $0.923 \pm 0.023$ &          $0.921 \pm 0.022$ &          $0.267 \pm 0.021$ &          $0.369 \pm 0.026$  \\
Tree high &          0.5 &         PHATE &          $0.844 \pm 0.048$ &            $0.79 \pm 0.07$ &           $0.267 \pm 0.02$ &          $0.369 \pm 0.025$  \\
Tree high &          0.5 &      Rand-Geo &          $0.875 \pm 0.042$ &          $0.855 \pm 0.048$ &          $0.267 \pm 0.021$ &          $0.369 \pm 0.026$  \\
Tree high &          0.5 & Shortest Path &          $0.917 \pm 0.025$ &            $0.91 \pm 0.03$ &          $0.267 \pm 0.021$ &          $0.369 \pm 0.026$  \\
Tree high &          0.5 &          TSNE &          $0.922 \pm 0.035$ &           $0.91 \pm 0.045$ & $\mathbf{0.237 \pm 0.021}$ & $\mathbf{0.334 \pm 0.027}$  \\
Tree high &          0.5 &          UMAP &          $0.823 \pm 0.054$ &          $0.803 \pm 0.041$ &          $0.261 \pm 0.021$ &          $0.361 \pm 0.026$  \\
\midrule
Tree high &          1.0 & Diffusion Map &          $0.598 \pm 0.117$ &          $0.613 \pm 0.103$ &          $0.267 \pm 0.021$ &          $0.369 \pm 0.026$  \\
Tree high &          1.0 &      Heat-Geo &          $0.794 \pm 0.066$ &          $0.805 \pm 0.049$ &          $0.267 \pm 0.021$ &          $0.369 \pm 0.026$  \\
Tree high &          1.0 &    Heat-PHATE &          $0.826 \pm 0.064$ &          $0.823 \pm 0.067$ &          $0.267 \pm 0.021$ &          $0.369 \pm 0.026$  \\
Tree high &          1.0 &         PHATE &          $0.827 \pm 0.059$ &           $0.82 \pm 0.062$ &           $0.267 \pm 0.02$ &          $0.369 \pm 0.025$  \\
Tree high &          1.0 &      Rand-Geo &           $0.71 \pm 0.043$ &          $0.686 \pm 0.045$ &          $0.267 \pm 0.021$ &          $0.369 \pm 0.026$  \\
Tree high &          1.0 & Shortest Path &          $0.771 \pm 0.064$ &           $0.753 \pm 0.07$ &          $0.267 \pm 0.021$ &          $0.369 \pm 0.026$  \\
Tree high &          1.0 &          TSNE &           $0.84 \pm 0.066$ &          $0.821 \pm 0.074$ &  $\mathbf{0.238 \pm 0.02}$ & $\mathbf{0.335 \pm 0.026}$  \\
Tree high &          1.0 &          UMAP & $\mathbf{0.853 \pm 0.051}$ & $\mathbf{0.839 \pm 0.057}$ &          $0.264 \pm 0.021$ &          $0.365 \pm 0.026$  \\
\midrule
\bottomrule
\end{tabular}

\end{adjustbox}
\caption{Comparison of the estimated distance matrices with the ground truth geodesic distance matrices on the Tree dataset, using a two-dimensional embedding. Best models on average are bolded (not necessarily significant).}
\label{tab:benchmark-embedding-tree}
\end{table}

\subsubsection{Clustering quality evaluation}

On Tables \ref{tab:clustering_toy}, we report the performance on clustering quality for the synthetic datasets with different noise level.

\begin{table}[htbp]
\centering
\begin{adjustbox}{width=\linewidth}

\begin{tabular}{lrllll}
\toprule
                data &  Noise level &   Method &                Homogeneity &        Adjusted Rand Score & Adjusted Mutual Info Score  \\
\midrule
          Swiss roll &          0.1 & Heat-Geo &  $\mathbf{0.82 \pm 0.008}$ & $\mathbf{0.668 \pm 0.034}$ &  $\mathbf{0.74 \pm 0.018}$  \\
          Swiss roll &          0.1 &    Phate &          $0.731 \pm 0.035$ &          $0.546 \pm 0.044$ &          $0.652 \pm 0.046$  \\
          Swiss roll &          0.1 &     TSNE &          $0.748 \pm 0.067$ &            $0.537 \pm 0.1$ &          $0.668 \pm 0.068$  \\
          Swiss roll &          0.1 &     UMAP &           $0.81 \pm 0.036$ &          $0.611 \pm 0.039$ &          $0.726 \pm 0.045$  \\
\midrule
          Swiss roll &          0.5 & Heat-Geo &          $0.813 \pm 0.026$ &          $0.656 \pm 0.049$ &          $0.733 \pm 0.022$  \\
          Swiss roll &          0.5 &    Phate &          $0.735 \pm 0.048$ &          $0.543 \pm 0.064$ &          $0.656 \pm 0.053$  \\
          Swiss roll &          0.5 &     TSNE &           $0.764 \pm 0.07$ &          $0.564 \pm 0.097$ &          $0.684 \pm 0.065$  \\
          Swiss roll &          0.5 &     UMAP & $\mathbf{0.826 \pm 0.019}$ & $\mathbf{0.664 \pm 0.073}$ & $\mathbf{0.744 \pm 0.032}$  \\
\midrule
          Swiss roll &          1.0 & Heat-Geo &          $0.722 \pm 0.051$ &          $0.548 \pm 0.091$ &          $0.652 \pm 0.056$  \\
          Swiss roll &          1.0 &    Phate &          $0.482 \pm 0.014$ &          $0.317 \pm 0.031$ &          $0.428 \pm 0.021$  \\
          Swiss roll &          1.0 &     TSNE & $\mathbf{0.757 \pm 0.037}$ & $\mathbf{0.562 \pm 0.058}$ & $\mathbf{0.679 \pm 0.042}$  \\
          Swiss roll &          1.0 &     UMAP &          $0.726 \pm 0.041$ &           $0.51 \pm 0.077$ &            $0.65 \pm 0.05$  \\
\midrule
     Swiss roll high &          0.1 & Heat-Geo &  $\mathbf{0.82 \pm 0.015}$ & $\mathbf{0.666 \pm 0.033}$ & $\mathbf{0.739 \pm 0.019}$  \\
     Swiss roll high &          0.1 &    Phate &           $0.705 \pm 0.03$ &          $0.518 \pm 0.048$ &           $0.628 \pm 0.04$  \\
     Swiss roll high &          0.1 &     TSNE &          $0.757 \pm 0.078$ &          $0.558 \pm 0.115$ &           $0.677 \pm 0.08$  \\
     Swiss roll high &          0.1 &     UMAP &           $0.796 \pm 0.03$ &          $0.624 \pm 0.048$ &          $0.714 \pm 0.037$  \\
\midrule
     Swiss roll high &          0.5 & Heat-Geo & $\mathbf{0.805 \pm 0.021}$ & $\mathbf{0.655 \pm 0.047}$ & $\mathbf{0.725 \pm 0.035}$  \\
     Swiss roll high &          0.5 &    Phate &           $0.745 \pm 0.04$ &          $0.562 \pm 0.061$ &          $0.664 \pm 0.047$  \\
     Swiss roll high &          0.5 &     TSNE &          $0.747 \pm 0.075$ &           $0.538 \pm 0.11$ &          $0.668 \pm 0.075$  \\
     Swiss roll high &          0.5 &     UMAP &          $0.787 \pm 0.041$ &          $0.573 \pm 0.067$ &          $0.703 \pm 0.032$  \\
\midrule
     Swiss roll high &          1.0 & Heat-Geo &            $0.7 \pm 0.045$ &          $0.534 \pm 0.057$ &          $0.644 \pm 0.032$  \\
     Swiss roll high &          1.0 &    Phate &          $0.552 \pm 0.047$ &          $0.386 \pm 0.056$ &           $0.496 \pm 0.04$  \\
     Swiss roll high &          1.0 &     TSNE &          $0.754 \pm 0.034$ &          $0.548 \pm 0.068$ &          $0.675 \pm 0.036$  \\
     Swiss roll high &          1.0 &     UMAP &  $\mathbf{0.76 \pm 0.041}$ &  $\mathbf{0.56 \pm 0.077}$ &   $\mathbf{0.68 \pm 0.05}$  \\
\midrule
Swiss roll very high &          0.1 & Heat-Geo & $\mathbf{0.818 \pm 0.033}$ & $\mathbf{0.668 \pm 0.074}$ & $\mathbf{0.738 \pm 0.039}$  \\
Swiss roll very high &          0.1 &    Phate &          $0.688 \pm 0.043$ &          $0.497 \pm 0.053$ &          $0.614 \pm 0.053$  \\
Swiss roll very high &          0.1 &     TSNE &           $0.741 \pm 0.07$ &          $0.544 \pm 0.101$ &          $0.662 \pm 0.075$  \\
Swiss roll very high &          0.1 &     UMAP &          $0.816 \pm 0.042$ &           $0.65 \pm 0.069$ &          $0.733 \pm 0.054$  \\
\midrule
Swiss roll very high &          0.5 & Heat-Geo &           $0.73 \pm 0.045$ & $\mathbf{0.605 \pm 0.093}$ &          $0.701 \pm 0.028$  \\
Swiss roll very high &          0.5 &    Phate &          $0.758 \pm 0.034$ &           $0.55 \pm 0.037$ &          $0.676 \pm 0.014$  \\
Swiss roll very high &          0.5 &     TSNE &           $0.77 \pm 0.054$ &          $0.557 \pm 0.093$ & $\mathbf{0.708 \pm 0.031}$  \\
Swiss roll very high &          0.5 &     UMAP & $\mathbf{0.789 \pm 0.052}$ &          $0.574 \pm 0.101$ &          $0.707 \pm 0.061$  \\
\midrule
Swiss roll very high &          1.0 & Heat-Geo &          $0.592 \pm 0.033$ &          $0.427 \pm 0.063$ &          $0.545 \pm 0.031$  \\
Swiss roll very high &          1.0 &    Phate &          $0.531 \pm 0.042$ &          $0.377 \pm 0.046$ &          $0.486 \pm 0.045$  \\
Swiss roll very high &          1.0 &     TSNE & $\mathbf{0.738 \pm 0.019}$ & $\mathbf{0.551 \pm 0.039}$ & $\mathbf{0.662 \pm 0.025}$  \\
Swiss roll very high &          1.0 &     UMAP &          $0.736 \pm 0.057$ &          $0.542 \pm 0.102$ &           $0.66 \pm 0.061$  \\
\midrule
                Tree &          0.1 & Heat-Geo & $\mathbf{0.784 \pm 0.051}$ &  $\mathbf{0.734 \pm 0.07}$ & $\mathbf{0.786 \pm 0.051}$  \\
                Tree &          0.1 &    Phate &           $0.55 \pm 0.042$ &          $0.409 \pm 0.064$ &          $0.555 \pm 0.042$  \\
                Tree &          0.1 &     TSNE &          $0.706 \pm 0.054$ &           $0.61 \pm 0.075$ &          $0.712 \pm 0.055$  \\
                Tree &          0.1 &     UMAP &          $0.678 \pm 0.086$ &           $0.584 \pm 0.12$ &          $0.681 \pm 0.086$  \\
\midrule
                Tree &          0.5 & Heat-Geo &          $0.545 \pm 0.121$ &          $0.411 \pm 0.154$ &          $0.577 \pm 0.094$  \\
                Tree &          0.5 &    Phate &          $0.529 \pm 0.111$ &          $0.404 \pm 0.151$ &          $0.555 \pm 0.095$  \\
                Tree &          0.5 &     TSNE & $\mathbf{0.647 \pm 0.049}$ & $\mathbf{0.591 \pm 0.065}$ &           $0.65 \pm 0.048$  \\
                Tree &          0.5 &     UMAP &          $0.645 \pm 0.051$ &          $0.565 \pm 0.058$ &  $\mathbf{0.652 \pm 0.05}$  \\
\midrule
                Tree &          1.0 & Heat-Geo &           $0.398 \pm 0.07$ &            $0.3 \pm 0.077$ &            $0.42 \pm 0.07$  \\
                Tree &          1.0 &    Phate &           $0.418 \pm 0.08$ &          $0.337 \pm 0.093$ &           $0.43 \pm 0.075$  \\
                Tree &          1.0 &     TSNE &          $0.405 \pm 0.077$ &          $0.378 \pm 0.074$ &          $0.405 \pm 0.077$  \\
                Tree &          1.0 &     UMAP & $\mathbf{0.432 \pm 0.086}$ & $\mathbf{0.395 \pm 0.098}$ & $\mathbf{0.432 \pm 0.085}$  \\
\midrule
\bottomrule
\end{tabular}

\end{adjustbox}
\caption{Clustering results on swiss roll (with distribution) and tree. Best models on average are bolded (not necessarily significant).}
\label{tab:clustering_toy}
\end{table}

\subsection{Impact of the different hyperparameters}

We investigate the impact of the different hyperparameters on the quality of the embeddings. In Figure~\ref{fig:hyperparams_eb}, we show the embeddings of \methodshort for different values of diffusion time, number of neighbours, order, and Harnack regularization.

\begin{figure}[htbp]
    \centering
    \includegraphics[width=\linewidth]{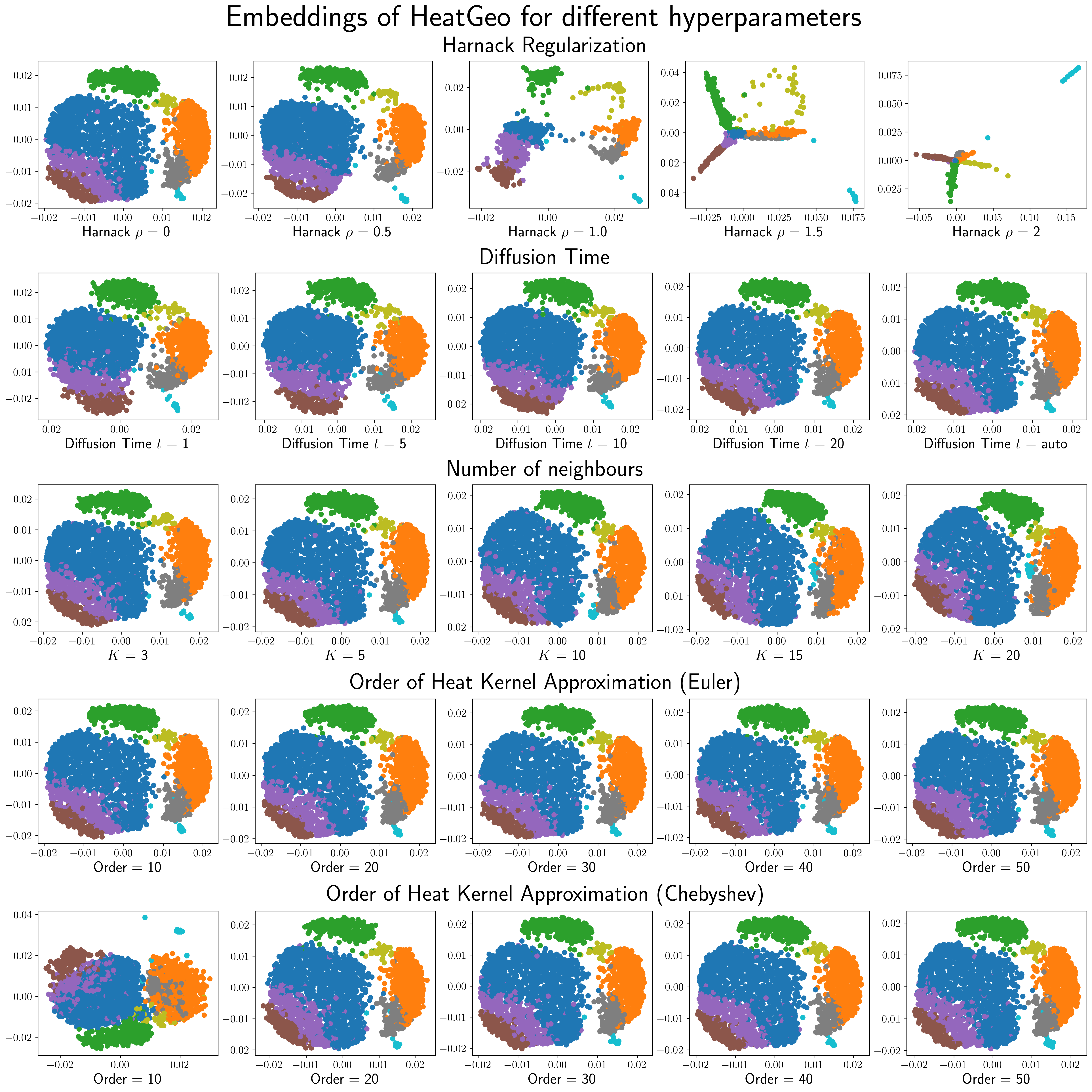}
    \caption{Embeddings of \method~for different choices of hyperparameters on the EB dataset. We evaluate the impact of the Harnack regularization, the diffusion time, the number of neighbours in the kNN, and the order of the approximation for Euler and Checbyshev approximations.}
    \label{fig:hyperparams_eb}
\end{figure}

In Figures~\ref{fig:sweep_tau_swiss},~\ref{fig:sweep_order_swiss},~\ref{fig:sweep_knn_swiss},~and~\ref{fig:sweep_harnack_swiss}, we show the impact of different hyperparameters on the Pearson correlation between the estimated distance matrix and ground truth distance matrix for different methods on the Swiss roll dataset.

\begin{figure}[htbp]
    \centering
    \includegraphics[width=\linewidth]{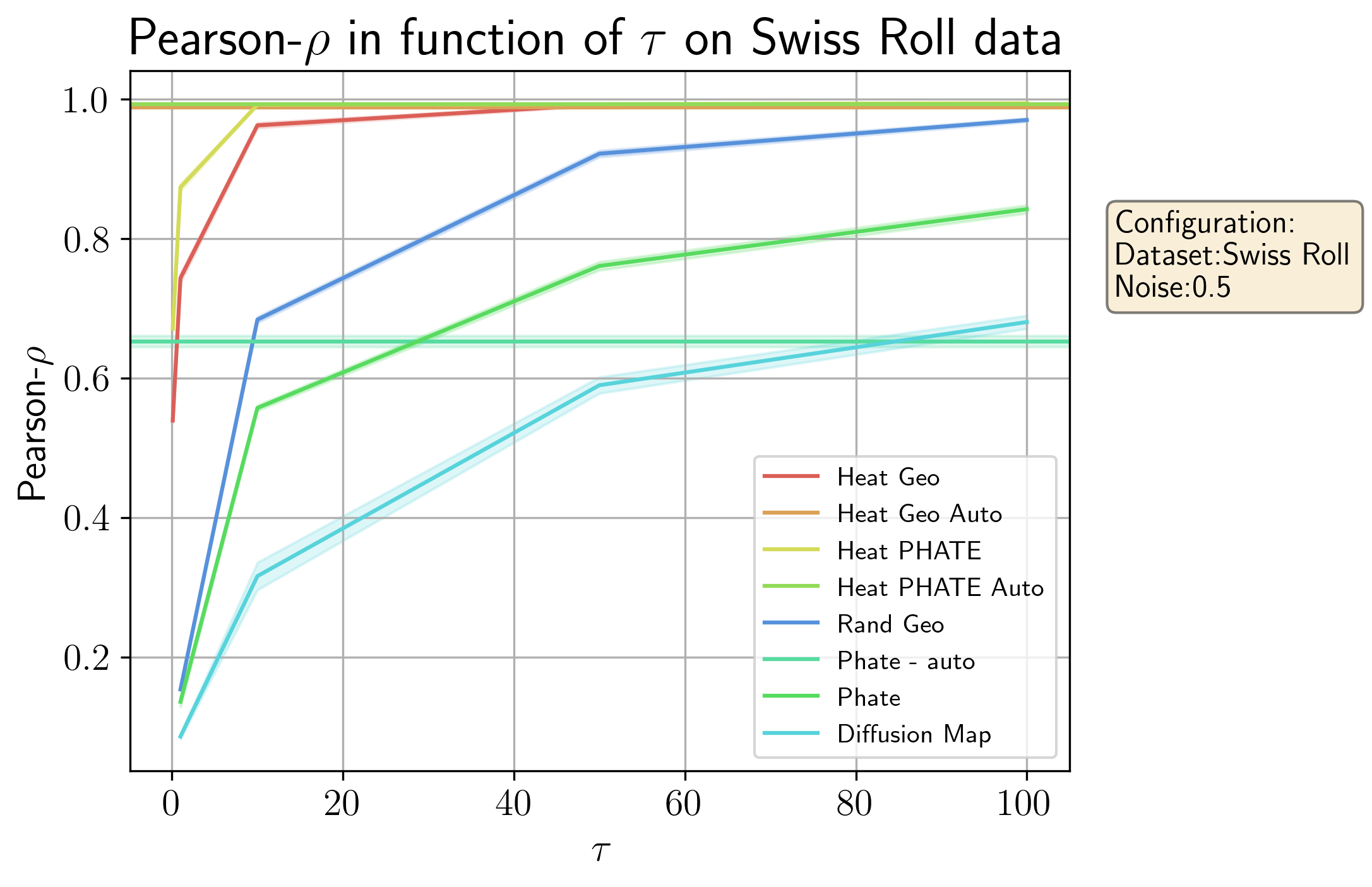}
    \caption{Impact of diffusion time on the Pearson correlation between the estimated distance matrix and ground truth distance matrix for different methods on the Swiss roll dataset.}
    \label{fig:sweep_tau_swiss}
\end{figure}

\begin{figure}[htbp]
    \centering
    \includegraphics[width=\linewidth]{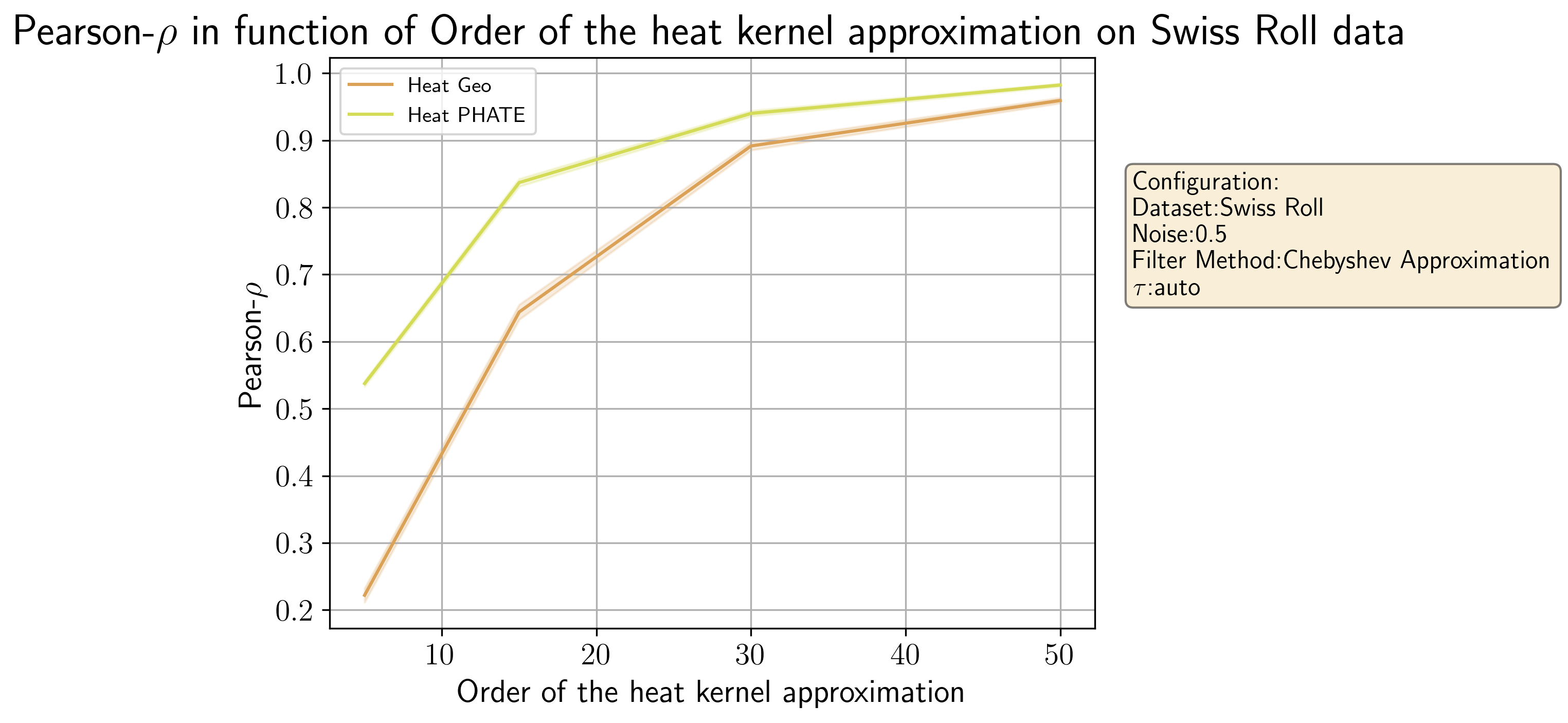}
    \caption{Impact of Checbyshev approximation order on the Pearson correlation between the estimated distance matrix and ground truth distance matrix for different methods on the Swiss roll dataset.}
    \label{fig:sweep_order_swiss}
\end{figure}

\begin{figure}[htbp]
    \centering
    \includegraphics[width=\linewidth]{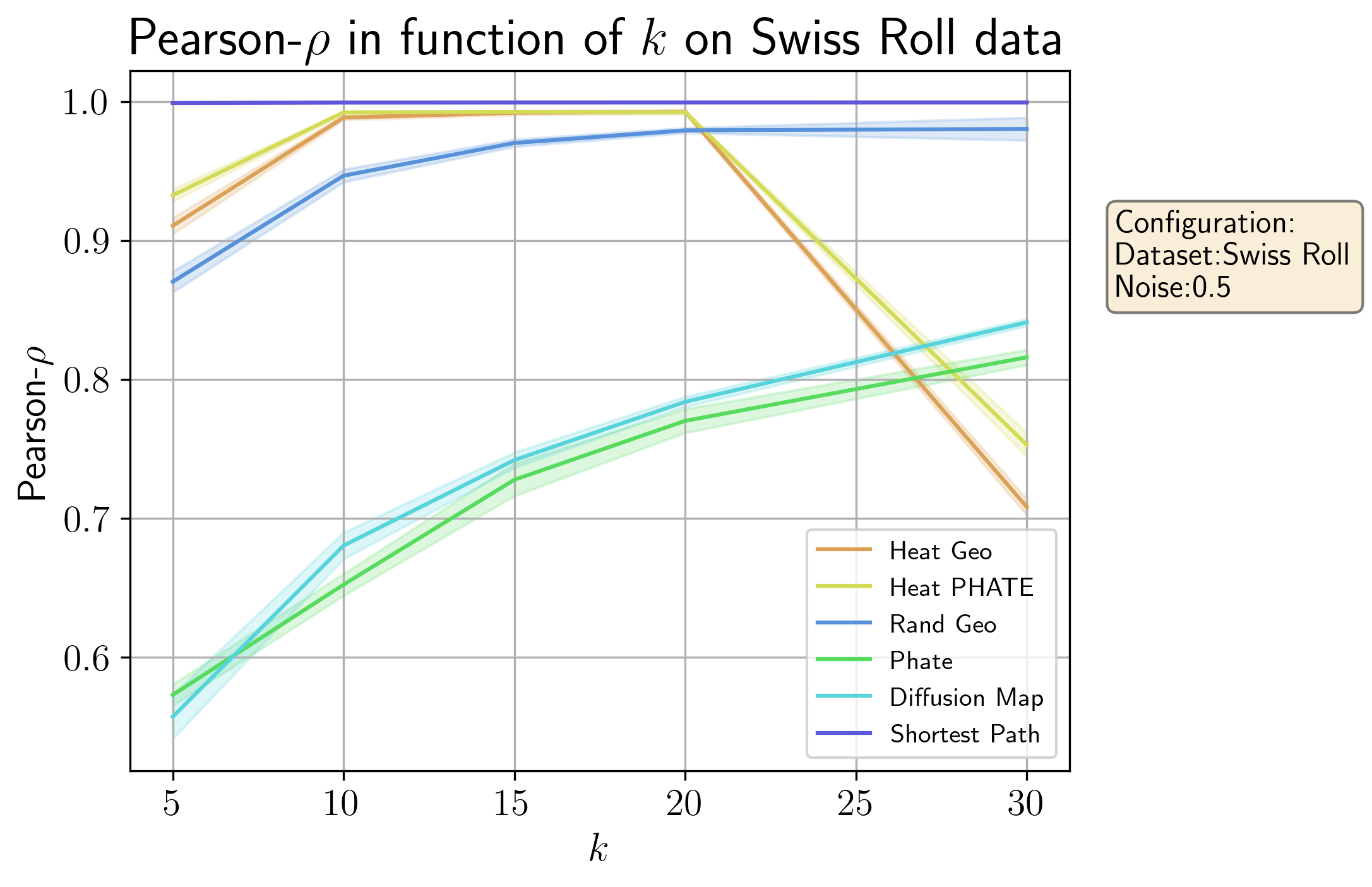}
    \caption{Impact of number of neighbours  on the Pearson correlation between the estimated distance matrix and ground truth distance matrix for different methods on the Swiss roll dataset.}
    \label{fig:sweep_knn_swiss}
\end{figure}

\begin{figure}[htbp]
    \centering
    \includegraphics[width=\linewidth]{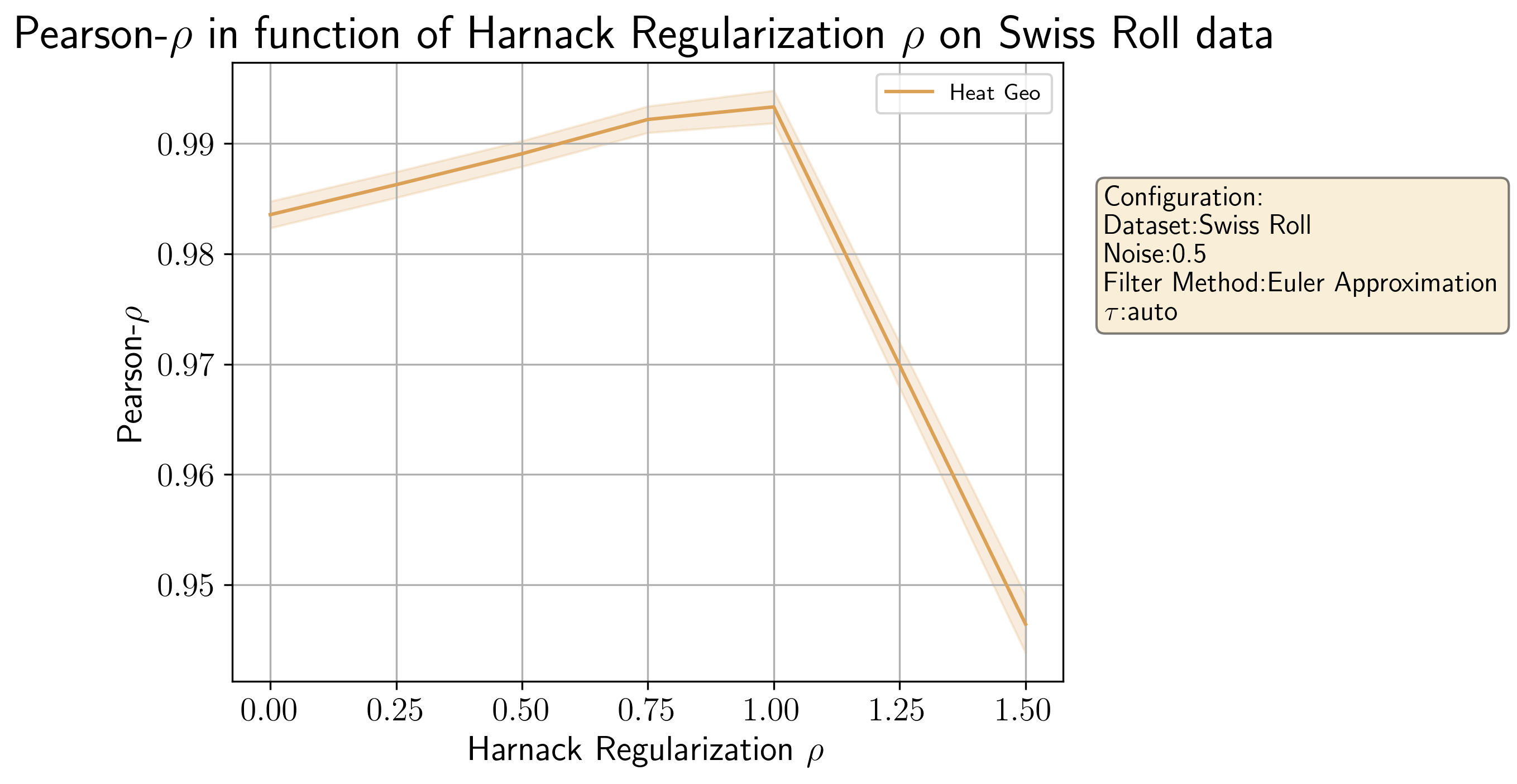}
    \caption{Impact of Harnack regularization on the Pearson correlation between the estimated distance matrix and ground truth distance matrix for \methodshort~on the Swiss roll dataset.}
    \label{fig:sweep_harnack_swiss}
\end{figure}

\subsection{Graph construction}
We compare the embeddings of the heat-geodesic distance for different graph construction. Throughout the paper we used the graph construction from PHATE~\cite{moon_visualizing_2019}. In the following we present additional results depending on the choice of kernel to construct the graph. Specifically, we use a simple nearest neighbor (kNN) graph implemented in~\cite{pygsp}, the graph from UMAP~\cite{mcinnes2018umap}, and the implementation in the package Scanpy~\cite{wolf2018scanpy} for single-cell analysis. In figure, we present the embeddings 2500 points of a tree with five branches in 10 dimensions, where the observations are perturbed with a standard Gaussian noise. All methods used five nearest neighbors and a diffusion time of 20. In Figure~\ref{fig:graph_type}, we show the evolution of the Pearson correlation between estimated and ground truth distance matrices for the 10-dimensional Swiss roll dataset for various graph constructions. We note that the results are stable across different graph construction strategies.

\begin{figure}[htbp]
    \centering
    \includegraphics[width=0.8\linewidth]{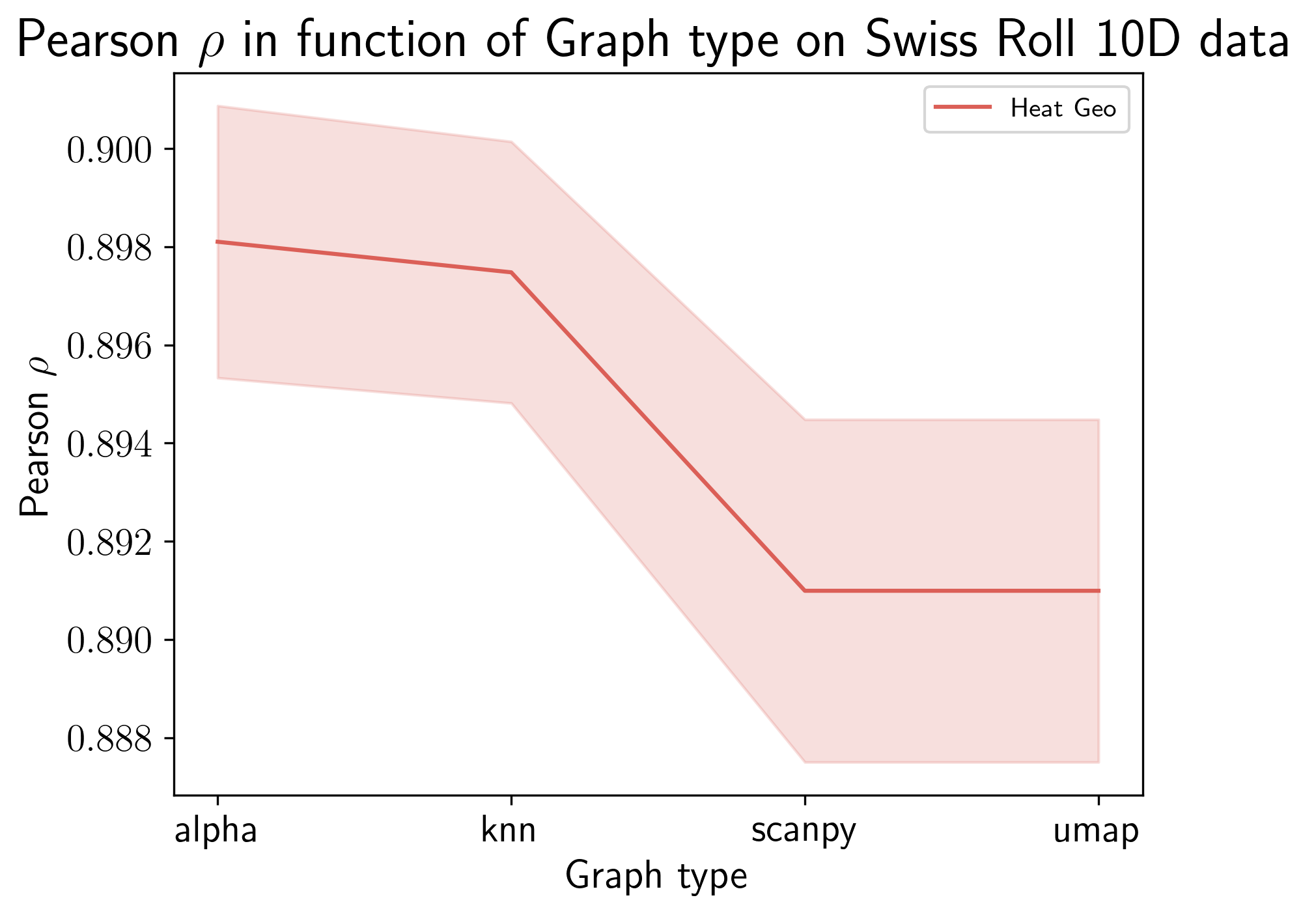}
    \caption{Pearson correlation between estimated and ground truth distance matrices for the 10-dimensional Swiss roll dataset for various graph constructions. Standard deviations are computed over the 5 test folds.}
    \label{fig:graph_type}
\end{figure}

\end{document}